\documentclass[conference]{IEEEtran}
\IEEEoverridecommandlockouts
\usepackage{cite}
\usepackage{amsmath,amssymb,amsfonts}
\usepackage{textcomp}
\usepackage{algorithm}
\usepackage{algorithmic}
\usepackage{graphicx}
\usepackage{subfigure}
\usepackage{amssymb}
\usepackage{subcaption}
\usepackage{microtype}      
\usepackage{xcolor}         
\usepackage{mathtools}
\usepackage{enumitem}
\usepackage{multicol}
\usepackage{multirow}
\usepackage{makecell}
\usepackage{pifont}
\usepackage{booktabs}
\usepackage{makecell}
\newtheorem{theorem}{Theorem}

\newtheorem{proof}{Proof}

\newtheorem{proposition}{Proposition}
\usepackage{newfloat}
\usepackage{listings}
\makeatletter

\newcommand{\Rmnum}[1]{\expandafter\@slowromancap\romannumeral #1@}
\makeatother
\def\BibTeX{{\rm B\kern-.05em{\sc i\kern-.025em b}\kern-.08em
    T\kern-.1667em\lower.7ex\hbox{E}\kern-.125emX}}
\begin{document}

\title{Unbiased Image Synthesis via Manifold Guidance in Diffusion Models
}

\author{\IEEEauthorblockN{Xingzhe Su, Daixi Jia, Fengge Wu, Junsuo Zhao, Changwen Zheng, Wenwen Qiang \IEEEauthorrefmark{1} \thanks{* Corresponding author}}
\IEEEauthorblockA{\textit{Institute of Software, Chinese Academy of Sciences.} \\
\textit{University of Chinese Academy of Sciences.} \\ Beijing, 100190, China \\
\{xingzhe2018, jiadaxi2022, fengge, junsuo, changwen, qiangwenwen\}@iscas.ac.cn}
}

\maketitle

\begin{abstract}
Diffusion Models are a potent class of generative models capable of producing high-quality images. However, they often inadvertently favor certain data attributes, undermining the diversity of generated images. This issue is starkly apparent in skewed datasets like CelebA, where the initial dataset disproportionately favors females over males by 57.9\%, this bias amplified in generated data where female representation outstrips males by 148\%. 
In response, we propose a plug-and-play method named Manifold Guidance Sampling, which is also the first unsupervised method to mitigate bias issue in DDPMs. Leveraging the inherent structure of the data manifold, this method steers the sampling process towards a more uniform distribution, effectively dispersing the clustering of biased data. Without the need for modifying the existing model or additional training, it significantly mitigates data bias and enhances the quality and unbiasedness of the generated images.
\end{abstract}

\begin{IEEEkeywords}
Diffusion Models, Image Synthesis, Data Bias, Manifold Guidance
\end{IEEEkeywords}

\section{Introduction}
\label{sec:intro}

Diffusion Models, also known as Denoising Diffusion Probabilistic Models (DDPMs) \cite{ho2020denoising}, stand out as a superior class of generative models, capable of synthesizing high-quality samples from intricate data domains. Surpassing Generative Adversarial Networks (GANs) in various benchmarks \cite{dhariwal2021diffusion}, they have catalyzed a new wave of applications in image inpainting \cite{lugmayr2022repaint, chungimproving}, image super-resolution \cite{saharia2022image}, and 3D point cloud generation \cite{luo2021diffusion,nichol2022point}. However, despite these significant advancements, DDPMs remain prone to biases inherent in training data, often resulting in skewed and imbalanced outputs \cite{zhao2017men,wang2019balanced,hall2022systematic}.


The manifestation of biases as uneven distributions of data attributes is a growing concern and adversely affects model behavior \cite{wilson2019predictive,buolamwini2018gender,hirota2022quantifying}. This vulnerability to bias extends to DDPMs. A case in point is the CelebA dataset, widely used in facial recognition tasks, which exhibits gender imbalance that female representation surpasses males by 57.9\%. A DDPM trained on this dataset further magnifies this discrepancy, resulting in an output where the proportion of females is 148\% higher than that of males, as depicted in Fig.\ref{fig1}(b) (\textbf{middle}).
\begin{figure}[!h]
\centering
\subfigure[]{\includegraphics[width=0.46\textwidth]{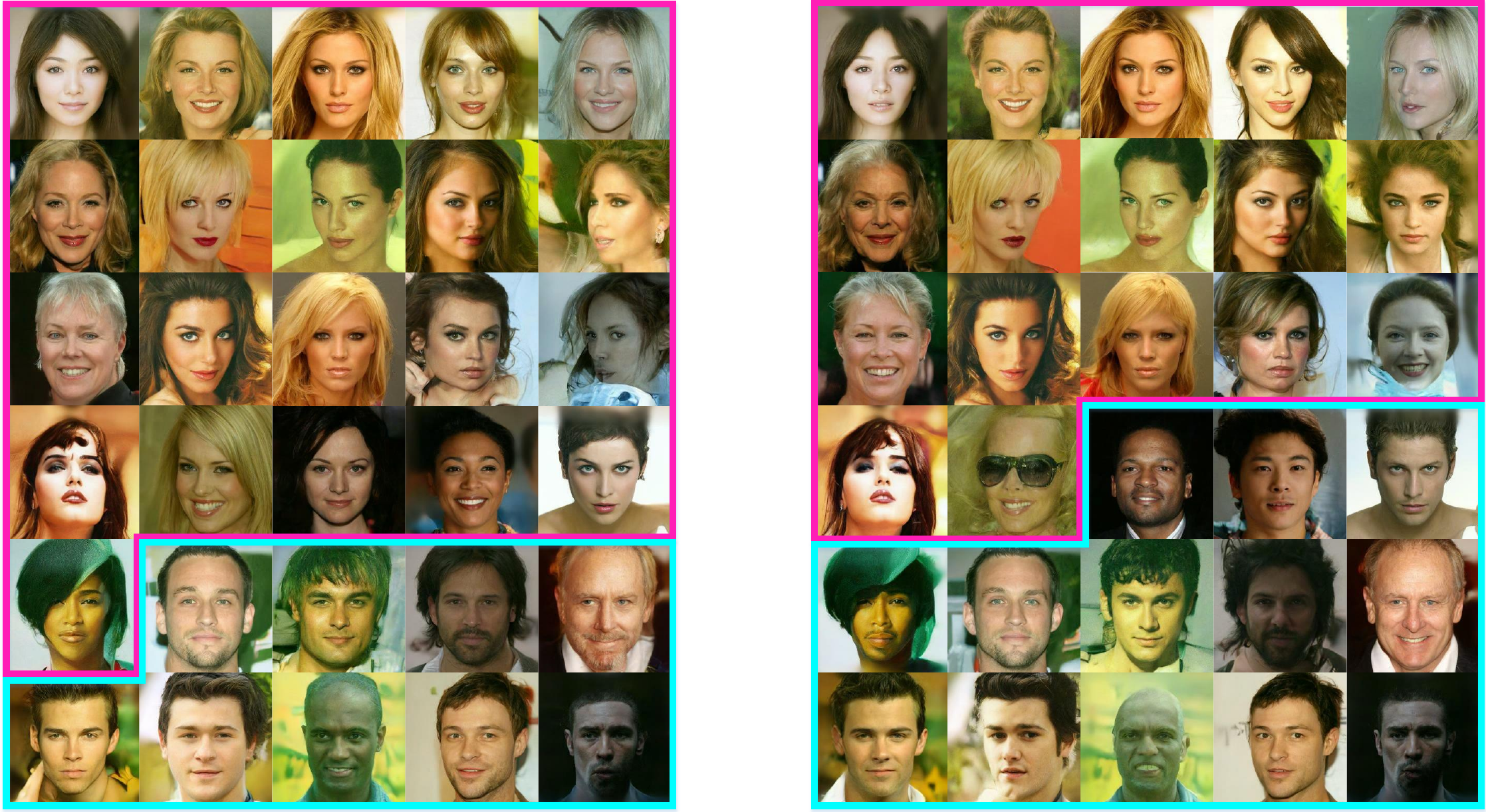}}
\subfigure[]{\includegraphics[width=0.5\textwidth]{pics/pie_gender.pdf}}
\caption{(a) Images generated by DDPM using standard sampling (\textbf{left}), and manifold-guided sampling (Ours) (\textbf{right}), from the same model. (b) Gender distribution in the CelebA dataset (\textbf{left}), generated data from DDPM using standard sampling (\textbf{middle}), and manifold-guided sampling (Ours) (\textbf{right}).}
\label{fig1}
\end{figure}


Influenced by biased training data, DDPMs often inherit or exacerbate biases in generated images due to a lack of explicit constraints on the generation distribution. Addressing this issue directly from the data perspective typically necessitates retraining the DDPMs, incurring substantial computational and time costs \cite{yang2022diffusion}. While existing text-to-image generation studies have attempted bias mitigation without retraining \cite{orgad2023editing, friedrich2023fair,seshadri2023bias}, these methods generally depend on prior bias knowledge, which is not readily available preemptively.

To overcome these challenges, we introduce Manifold Guidance Sampling ($MGS$), an innovative unsupervised strategy, leveraging the common assumption that the true data distribution is uniformly spread across its intrinsic manifold \cite{brown2022verifying,liu2021rectifying}. $MGS$ modifies the DDPMs’ sampling process without additional training, aligning the generated data with the true data manifold. This alignment disrupts the biased clustering of data, leading to a fairer distribution of generated samples and enhanced diversity across image classes, all without the need for prior bias knowledge. To the best of our knowledge, $MGS$ is the first unsupervised method designed to systematically counteract bias in DDPMs. 

Our method consists of two main components: (1) an unsupervised technique to estimate the true data manifold from training data, designed to disregard the influence of biased samples, and (2) a Manifold Guidance Sampling mechanism to guide the sampling process of DDPMs towards uniform distribution on the estimated manifold. Notably, our method does not require any labels or retraining of the diffusion model, and can be applied to any pre-trained model. We provide theoretical analysis and empirical evidence to show that our method can improve the quality and unbiasedness of samples. 

\section{Preliminary DDPMs}
\label{bg}
Let $\boldsymbol{x}_0$ be an image, DDPMs firstly construct a forward process $p(\boldsymbol{x}_{1:N}|\boldsymbol{x}_0)$ that injects noise to the data distribution $p(\boldsymbol{x}_0)$. This diffusion process, a type of Markov chain, adheres to $p\left(\boldsymbol{x}_{t} \mid \boldsymbol{x}_{t-1}\right) = \mathcal{N}\left(\boldsymbol{x}_{t} \mid \sqrt{\alpha_{t}} \boldsymbol{x}_{t-1}, \beta_{t} \boldsymbol{I}\right)$,where $\boldsymbol{I}$ is the identity matrix, $\beta_t \in (0,1)$ is the forward noise schedule, and $\alpha_t \coloneqq 1-\beta_t, t = 1,\dots,N$. Utilizing the properties of the conditional Gaussian distribution, the transition probabilities can be expressed as:
\begin{eqnarray}
\setlength{\abovedisplayskip}{3pt}
\setlength{\belowdisplayskip}{3pt}
p\left(\boldsymbol{x}_{t} \mid \boldsymbol{x}_{0}\right) & = & \mathcal{N}\left(\sqrt{\bar{\alpha}_{t}} \boldsymbol{x}_{0},\left(1-\bar{\alpha}_{t}\right) \boldsymbol{I}\right) \\ \nonumber
p\left(\boldsymbol{x}_{t-1} \mid \boldsymbol{x}_{t}, \boldsymbol{x}_{0}\right) & = & \mathcal{N}\left(\bar{\mu}_{t}\left(\boldsymbol{x}_{t}, \boldsymbol{x}_{0}\right), \bar{\beta}_{t} \boldsymbol{I}\right)
\label{eq1}
\end{eqnarray}
Here, $\bar{\alpha}_{t} = \prod_{i= 1}^{t} \alpha_{i}$, $\bar{\mu}_{t} = \frac{\sqrt{\bar{\alpha}_{t-1}} \beta_{t}}{1-\bar{\alpha}_{t}} x_{0}+\frac{\sqrt{\alpha_{t}}\left(1-\bar{\alpha}_{t-1}\right)}{1-\bar{\alpha}_{t}} x_{t}$ and $\bar{\beta}_{t}= \frac{1-\bar{\alpha}_{t-1}}{1-\bar{\alpha}_{t}} \beta_{t}$.

The reverse process is similarly a Markov chain, designed to approximate $p(\boldsymbol{x}_0)$ by progressively removing noise, starting from a Gaussian distribution $q(\boldsymbol{x}_N) =\mathcal{N}(\boldsymbol{x}_N \mid 0, \boldsymbol{I})$:
\begin{eqnarray}
q\left(\boldsymbol{x}_{0: N}\right) =  q\left(\boldsymbol{x}_{N}\right) \prod_{t = 1}^{N} q\left(\boldsymbol{x}_{t-1} \mid \boldsymbol{x}_{t}\right), \\ q\left(\boldsymbol{x}_{t-1} \mid \boldsymbol{x}_{t}\right) =  \mathcal{N}\left(\boldsymbol{x}_{t-1} \mid \boldsymbol{\mu}_{t}\left(\boldsymbol{x}_{t}\right), \bar{\beta}_{t} \boldsymbol{I}\right)
\end{eqnarray}
$\boldsymbol{\mu}_{t}\left(\boldsymbol{x}_{t}\right)$ is generally parameterized by a time-dependent noise estimation model $\epsilon_{\theta}(\boldsymbol{x}_t,t)$ \cite{ho2020denoising}. The optimization function is:
\begin{eqnarray}
L_{t-1}  =  \mathbb{E}_{\boldsymbol{x}_{0}, \epsilon}\left[\eta \left\|\epsilon-\epsilon_{\theta}\left(\sqrt{\bar{\alpha}_{t}} \boldsymbol{x}_{0}+\sqrt{1-\bar{\alpha}_{t}} \epsilon, t\right)\right\|^{2}\right],
\end{eqnarray}
where $\epsilon_{\theta}$ is an estimation of the noise $\epsilon$, $\eta = \frac{\beta_{t}^{2}}{\alpha_{t}\left(1-\bar{\alpha}_{t}\right)}$ \cite{songdenoising}.

The diffusion process can also be interpreted as solving a certain stochastic differential equation (SDE) \cite{song2020score}: $\mathrm{d} \mathbf{x} = \mathbf{f}(\mathbf{x}, t) \mathrm{d} t+g(t) \mathrm{d} \mathbf{w}$, where $\mathbf{f}(\mathbf{x}, t)$ and $g(t)$ are diffusion and drift functions of the SDE, and $w$ is a standard Wiener process. Intriguingly, any diffusion process described by the SDE can be reversed through a corresponding reverse-time SDE:
\begin{eqnarray}
\mathrm{d} \mathbf{x}  =  \left[\mathbf{f}(\mathbf{x}, t)-g(t)^{2} \nabla_{\mathbf{x}} \log p_{t}(\mathbf{x})\right] \mathrm{d} t+g(t) \mathrm{d} \overline{\mathbf{w}}
\label{reverse-sde}
\end{eqnarray}
Here $\overline{\mathbf{w}}$ is a standard Wiener process when time flows backwards, and $\mathrm{d} t$ denotes an infinitesimal negative time step.

\begin{figure}[htbp]
\centering
\includegraphics[width=0.5\textwidth]{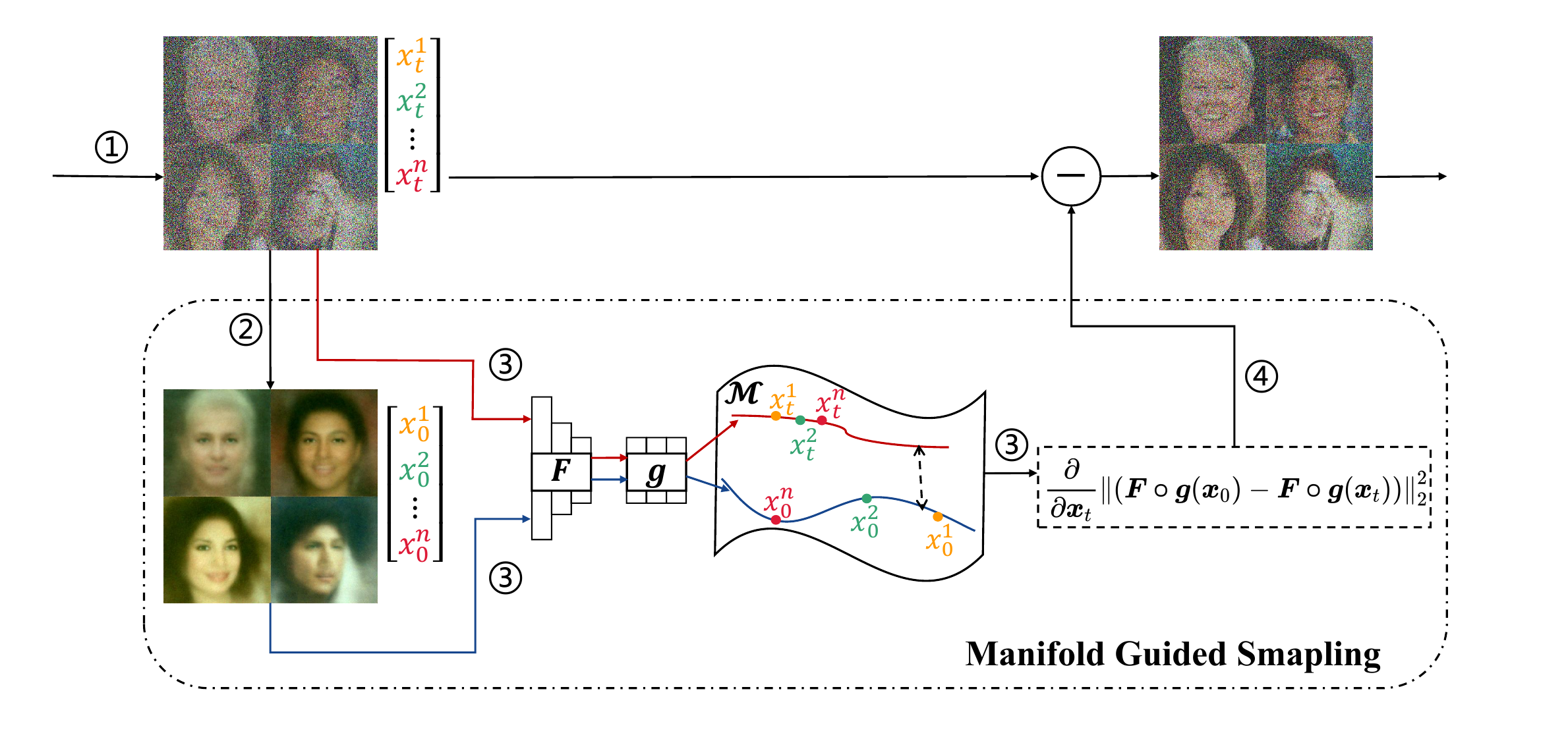}
\caption{The sampling process of DDPMs at step $t$. \ding{172} Unconditional reverse diffusion generates $\boldsymbol{x}_t^i, i=1,2,...,n$. \ding{173} Generate $\hat{\boldsymbol{x}}_0^i$ from $\boldsymbol{x}_t^i$ based on Tweedie's formula. \ding{174} Calculate the manifold constraint gradient, where $\boldsymbol{F}$ and $\boldsymbol{g}$ are pre-trained networks. \ding{175} Apply the manifold constraint guidance on the sampling process of DDPMs.}
\label{fig2}
\end{figure}

\section{Method}
\label{method}
Without explicit constraints on the generation distribution, DDPMs tend to be influenced by dataset biases and producing data clustered around certain attributes. In response, we propose Manifold Guidance Sampling ($MGS$) which incorporates manifold constraints into the DDPMs’ sampling process. This method promotes a uniform distribution of generated samples across the data manifold and effectively disperses biased clustering, all without retraining of the DDPMs. $MGS$ unfolds in two phases. First, in \textbf{Section} \ref{3.1}, we propose a novel approach to evaluate the data manifold. Following this, in \textbf{Section} \ref{3.2}, we design a manifold-guided sampling technique in DDPMs. The framework of $MGS$ is shown in Fig.\ref{fig2}.

\subsection{Data Manifold Evaluation}
\label{3.1}

Evaluating the geometric properties of distributions in high-dimensional ambient space, $\mathbb{R}^{D}$, is impractical \cite{ray2021various}. In response, we propose mapping high-dimensional data to a low-dimensional space, thereby reducing data complexity and enabling more accurate estimation of the data manifold. Our evaluation emphasizes two steps: learning an efficient mapping to the low-dimensional space and assessing the intrinsic data manifold within the transformed domain. 

\textbf{Step} \Rmnum{1}\textbf{. Learn an efficient mapping }

In this paper, we make a foundational assumption that the true data manifold comprises of low-dimensional nonlinear submanifolds $\cup_{j = 1}^{k} \mathcal{M}_{j} \subset \mathbb{R}^{D}$, where each submanifold $\mathcal{M}_j$ has a dimension $d_j<<D$. This assumption, supported by existing literature \cite{brown2022verifying}, posits that images from each submanifold $\mathcal{M}_{j} \subset \mathbb{R}^{D}$ can be effectively represented in a lower-dimensional feature space $\mathcal{S}_{j} \subset \mathbb{R}^{d}$. Accordingly, in this optimal feature space which mirrors the structure of true data manifold, image representations should exhibit: \textit{1) Between-Mode Discrepancy. Representations from different submanifolds should be highly uncorrelated; 2) Within-Mode Similarity. Representations from the same submanifold should be relatively correlated; 3) Maximally Variance. Representations should have as large dimension as possible to cover all the submanifolds and be variant for the same submanifold.}

\textbf{1) Objective function}. Our objective is to ensure mapped features exhibit the properties above and capture the complex structure of the data manifold. To ensure between-mode discrepancy, features from different submanifolds should span the largest possible volume for a sparse feature distribution. Conversely, features within the same submanifold should span a minimal volume for a compact feature distribution. Thus, we need to measure the relationship between features, focusing on the feature distribution compactness.

Recall that singular values $\lambda_{i}$ often correspond to important information implied in a matrix $\mathbf{M}$. Singular vectors corresponding to larger singular values delineate the principal stretching directions of the data. Specifically, larger and more numerous singular values are characteristic of more uniformly distributed data, while smaller and fewer singular values indicate compactness. Thus, we can use $\mathcal{S} = \sum_{i = 1} \lambda_{i}^{2}$ to measure the compactness of the feature distribution, with larger value of $\mathcal{S}$ indicating a broader span of the singular vectors and thus greater uniformity of the data. Considering computational efficiency, we opt for the trace of $\mathbf{M} \mathbf{M}^{T}$ instead of $\mathcal{S}$. Our designed objective function is given by: 
\begin{small}
\begin{eqnarray}
\label{eq4}
\mathcal{L}_{M}(Z) & = & \frac{1}{2 n}\left(\operatorname{Tr}\left(Z Z^{\mathrm{T}}\right)-\sum_{j=1}^{k} \operatorname{Tr}\left(Z C^j Z^{\mathrm{T}}\right)\right),
\end{eqnarray}
\end{small}
where $Z=\left[z_{1}, \ldots, z_{n}\right] \subset \mathbb{R}^{d \times n}$ denotes the representations of images, $\mathbf{C}=\left\{C^{j} \in \mathbb{R}^{n \times n}\right\}_{j=1}^{k}$ is a set of positive diagonal matrices whose diagonal entries denote the membership of the $n$ samples in the $k$ submanifolds. 
If the sample $x_i$ belongs to the submanifold $j$, then $C^{j}(i,i) = 1$. Otherwise, $C^{j}(i,i) = 0$. 

By maximizing the function $\mathcal{L}_{M}$, we can enforce compact distribution within submanifolds and uniform distribution across all images, thereby satisfying the required properties. Employing $\mathcal{L}_{M}$ as the objective function, a convolutional neural network $\boldsymbol{F}$ can be trained as the efficient mapping. 

\textbf{2) Unsupervised learning.} In the function $\mathcal{L}_{M}$, matrices $\mathbf{C}$ are unknown due to the unavailability of explicit submanifolds. Thus, we provide an unsupervised learning paradigm to infer $\mathbf{C}$. We utilize a three-layer MLP network $\boldsymbol{g}$ to learn the relationships among samples. The network takes the representations $Z$, and outputs a matrix $\mathbf{R} \in \mathbb{R}^{n \times n}$, where $\mathbf{R}(i,j)$ denotes the relationship between sample $x_i$ and $x_j$ for $i,j \in \left\{ {1,...,n} \right\}$. $\mathbf{R}(i,j) = 1$ when $x_i$ and $x_j$ belong to the same submanifold. For each sample $x_j$, we formulate a diagonal matrix $C^{j}$ with $C^{j}(i,i) = \mathbf{R}(j,i)$, thus constructing $\mathbf{C}$ as a set of relationship matrices for each sample.

To facilitate the training of network $\boldsymbol{g}$, we use a pre-trained encoder $\boldsymbol{E}_{pre}$ to obtain the prior feature representations $\bar Z = \left\{ {\bar z_1,...,\bar z_{n}} \right\}$, with $\bar z_i = {\boldsymbol{E}_{pre}}\left( {x_i} \right)$, $i \in \left\{ {1,...,n} \right\}$. Anchoring on each ${{{\bar z}_i}}$ from $\bar Z$, we define a prior relational matrix $\mathbf{R}^{pre}$ using the Gaussian kernel based on the Euclidean distance between the feature representation, modulated by a temperature hyperparameter $\tau$:
\begin{equation}
\mathbf{R}^{pre}_{i,j} = \exp ( {{{-\| {{{\bar z}_i} - {{\bar z}_j}} \|_2^2} \mathord{\left/
 {\vphantom {{\left\| {{{\bar z}_i} - {{\bar z}_j}} \right\|_2^2} \tau }} \right.
 \kern-\nulldelimiterspace} \tau }} )
\end{equation}

Matrix $\mathbf{R}^{pre}$ serves as a benchmark for the relationships that network $\boldsymbol{g}$ should learn. Thus, we define the loss function for $\boldsymbol{g}$ as the squared Euclidean distance between $\mathbf{R}^{pre}$ and the network’s output $\boldsymbol{g}(Z)$ in Eq.\ref{dswd}. In the optimal case, $\boldsymbol{g}$ will adeptly learn the underlying relationships between samples, correctly identifying members of the same submanifold.
\begin{equation}\label{dswd}
{\mathcal{L}_{con}} = \left\| {{\mathbf{R}^{pre}} - \boldsymbol{g}(Z)} \right\|_2^2
\end{equation}

\textbf{3) Training.} In our experimental setup, we initiate by training the network $\boldsymbol{g}$ to grasp the intricate relationships between the samples. Then, with the network $\boldsymbol{g}$ fixed, we proceed to train the network $\boldsymbol{F}$. Finally, we engage in joint training of both $\boldsymbol{F}$ and $\boldsymbol{g}$ using Eq.\ref{eq4} to obtain the optimal mapping $\boldsymbol{F}$. 

\textbf{Step} \Rmnum{2}\textbf{. Assess the intrinsic data manifold}

Directly computing the data manifold is inherently complex; however, our approach encapsulates the manifold’s structure by examining the connections between samples as interpreted by the trained networks $\boldsymbol{F}$ and $\boldsymbol{g}$. Specifically, we represent data manifold using the matrix $\mathbf{R} = \boldsymbol{g}\circ \boldsymbol{F}(X)$, where $\circ$ denotes the composition of the functions represented by the networks, applying $\boldsymbol{F}$ to input data $X$ followed by $\boldsymbol{g}$ to produce the relational matrix $\mathbf{R}$. We provide theoretical validation in \textbf{Section} \ref{ta} that our method accurately captures the data manifold in image space. Further, we offer visualizations of the representations $Z$ and the matrix $\mathbf{R}$ in Appendix C.1. This innovative approach to measuring the data manifold empowers us to enhance the sampling process of diffusion models, ensuring a more representative generation of images.

\subsection{Manifold Guidance Sampling}
\label{3.2}

The central aim of this paper is to facilitate the generation of images uniformly distributed along the true data manifold, denoted as $\boldsymbol{M}$. First, we need to estimate $\boldsymbol{M}$. According to the Tweedie's formula, for a Gaussian variable $z \sim \mathcal{N}\left(z ; \mu_{z}, \Sigma_{z}\right)$, $\mu_{z}=z+\Sigma_{z} \nabla \log p(z)$. From Eq.1, we can get that $\sqrt{\bar{\alpha}_{t}} \boldsymbol{x}_{0}=\boldsymbol{x}_{t}+\left(1-\bar{\alpha}_{t}\right) \nabla \log p\left(\boldsymbol{x}_{t}\right)$. Once the score function $\nabla_{\mathbf{x}} \log p_{t}(\mathbf{x})$ is estimated, we can estimate $\boldsymbol{\hat{x}}_{0}$ at every time step. Similarly, once the noise estimation model $\epsilon_{\theta}$ is trained, we can get $\boldsymbol{\hat{x}}_{0}={[\boldsymbol{x}_{t}-\sqrt{1-\bar{\alpha}_{t}} \boldsymbol{\epsilon}_{\theta}(\boldsymbol{x}_{t},t)]}/{\sqrt{\bar{\alpha}_{t}}}$. Thus, we can approximate the true data manifold at each time step by a batch of estimated $\boldsymbol{\hat{x}}_{0}$, i.e. $\boldsymbol{M} = \boldsymbol{H}( \frac{\boldsymbol{x}_{t}-\sqrt{1-\bar{\alpha}_{t}} \boldsymbol{\epsilon}_{\theta}(\boldsymbol{x}_{t},t)}{\sqrt{\bar{\alpha}_{t}}})$, where $\boldsymbol{H} = \boldsymbol{g\circ F}$ represents the composition of the pre-trained networks from the previous section, $\boldsymbol{x}_t$ is a batch of samples in our experiments.

Next, we need to add manifold constraint on the sampling process of DDPMs. Based on the Bayes rule $p(x|\boldsymbol{M})=p(\boldsymbol{M}|x)p(x)/p(\boldsymbol{M})$, we derive the gradient of the log probability as $\nabla_x \mathrm{log}p(x|\boldsymbol{M}) = \nabla_x \mathrm{log}p(x)+\nabla_x \mathrm{log}p(\boldsymbol{M}|x)$. To align the generated manifold with the true data manifold, we propose $MGS$ that modifies the reverse SDE function (Eq.\ref{reverse-sde}) as follows, without necessitating labels or model retraining:
\begin{eqnarray}
\begin{split}
   \mathrm{d} \mathbf{x} & =  \left[\mathbf{f}(\mathbf{x}, t)-g(t)^{2} \nabla_{\mathbf{x}} \log p_{t}(\mathbf{x})\right] \mathrm{d} t \\ 
   & \quad - \left[ \lambda \frac{\partial}{\partial \boldsymbol{x}}\left\|\left(\boldsymbol{M}-\boldsymbol{H}( \mathbf{x},t)\right)\right\|_{2}^{2} \right] \mathrm{d} t +g(t) \mathrm{d} \overline{\mathbf{w}}.
\end{split}
\end{eqnarray}
Similarly, we adapt the sampling process of DDPMs to integrate $MGS$ into the iterative denoising steps:
\begin{eqnarray}
\begin{split}
    \boldsymbol{x}_{t-1} & =\frac{1}{\sqrt{\alpha_{t}}}\left(\boldsymbol{x}_{t}-\frac{1-\alpha_{t}}{\sqrt{1-\bar{\alpha}_{t}}} \boldsymbol{\epsilon}_{\theta}\left(\boldsymbol{x}_{t}, t\right)\right) \\
    & \quad - {\lambda} \frac{\partial}{\partial \boldsymbol{x}_t}\left\|\left(\boldsymbol{M}-\boldsymbol{H}( \boldsymbol{x}_t)\right)\right\|_{2}^{2}+\sqrt{\bar{\beta}_{t}} \mathbf{z},
\end{split}
\end{eqnarray}
where $\boldsymbol{M}$ is the estimated true data manifold, $\boldsymbol{x}_t$ is a batch of samples, $\lambda$ is the scale of the manifold guidance. Since diffusion models generate the high-level context in the early stage and fine details in the later stage \cite{choi2022perception}, we apply our method in the early stage to achieve semantic changes. 

\subsection{Theoretical Analysis}
\label{ta}

In this section, we prove that our data manifold evaluation approach can accurately estimate the data manifold in the image space. We begin by establishing the theoretical connection between $\mathrm{Tr}(\mathbf{M} \mathbf{M}^T)$ and information theory. Then, we illustrate the properties of the optimal solution of Eq.\ref{eq4}.
\begin{proposition}
Given finite samples $Z$ from a distribution $P(z)$, and $\mathbf{M}=\left[z^{1}, \ldots, z^{n}\right] \subset \mathbb{R}^{d \times n}$, the square of the Frobenius norm or $\mathrm{Tr}(\mathbf{M} \mathbf{M}^T)$ represents the compactness of this distribution.
\end{proposition}
The proof of this proposition is available in the Appendix A. In short words, $\mathrm{Tr}(\mathbf{M} \mathbf{M}^T)$ can be seen as a variant to the rate reduction which has been used to measure the “compactness” of a random distribution. Based on this proposition and the data manifold assumption mentioned above, we can infer that the optimal solution of Eq.\ref{eq4} have following properties:
\begin{theorem}
Suppose $\boldsymbol{Z}^{*}$ is the optimal solution that maximizes the objective function Eq.\ref{eq4}. We have:

- Between-Mode Discrepancy: If the ambient space is adequately large, the subspaces are all orthogonal to each other, i.e.  $\left(\boldsymbol{Z}_{i}^{*}\right)^{\top} \boldsymbol{Z}_{j}^{*}=\mathbf{0}$  for $i \neq j$.

- Maximally Variance: If the coding precision is adequately high, i.e.,  $\epsilon^{4}<\min _{j}\left\{\frac{n_{j}}{n} \frac{d^{2}}{d_{j}^{2}}\right\}$, each subspace achieves its maximal dimension, i.e. $\operatorname{rank}\left(\boldsymbol{Z}_{j}^{*}\right)=d_{j}$ .
\end{theorem}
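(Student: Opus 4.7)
The plan is to treat the maximization of $\mathcal{L}_M$ as a discrete analogue of the maximal coding rate reduction problem, exploiting the compactness interpretation of $\operatorname{Tr}(MM^T)$ established in Proposition 1. Writing $Z_j$ for the block of $Z$ whose columns correspond to samples of submanifold $j$, we have $ZC^j Z^T = Z_j Z_j^T$, so the objective splits as a global compactness term $\operatorname{Tr}(ZZ^T)$ against a sum of local compactness terms $\sum_j \operatorname{Tr}(Z_j Z_j^T)$. The overall strategy is to upper-bound the global term by the sum of local ones plus a nonnegative cross-term which vanishes exactly when the column spans of the $Z_j^{*}$ are mutually orthogonal, and then optimize within each block.

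For the between-mode discrepancy claim, I would invoke the information-theoretic superadditivity promoted by Proposition 1 (playing the role of the log-det rate in the classical MCR$^2$-style argument), to obtain an inequality for $\mathcal{L}_M(Z)$ that is tight if and only if the column spans of $Z_j^{*}$ for distinct $j$ are pairwise orthogonal. Once the ambient dimension $d$ is large enough to accommodate $\sum_j d_j$ mutually orthogonal subspaces, the upper bound is attainable, so any maximizer must realize the orthogonality condition $(Z_i^{*})^{\top} Z_j^{*} = \mathbf{0}$ for $i \neq j$.

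For the maximal variance claim, I would fix the orthogonal block decomposition from the previous step, which decouples the problem into $k$ independent subspace maximizations. Within each block the restricted objective has the form of a single-subspace coding-rate-reduction problem in an ambient dimension at most $d$ with $n_j$ samples, for which the KKT stationarity conditions combined with Hadamard's inequality pin down the optimal rank. Substituting the precision hypothesis $\epsilon^{4} < n_j d^{2}/(n d_j^{2})$ into those conditions shows that any rank-deficient configuration is strictly dominated, forcing $\operatorname{rank}(Z_j^{*}) = d_j$.

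The hard part will be the superadditivity step, because the trace surrogate is strictly weaker than the log-det rate on which the classical orthogonality argument rests: I would need to argue carefully that Proposition 1 upgrades $\operatorname{Tr}(MM^T)$ to a functional for which the superadditivity is strict, not merely non-strict, so that the optimum is forced into the orthogonal configuration rather than merely permitted to lie there. A smaller but nontrivial hurdle is extracting the clean threshold $\epsilon^{4}<\min_j\{n_j d^{2}/(n d_j^{2})\}$ from the KKT algebra in the second step without loose constants.
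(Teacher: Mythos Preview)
Your overall plan matches the paper's: use Proposition~1 to replace the trace objective by its $\log\det$ (rate-distortion) surrogate, then run the classical MCR$^2$-style argument---first establish orthogonality of the blocks, then decouple and optimize each block separately to pin down the rank. Your identification of the hard part is exactly right: the raw trace objective of Eq.~\ref{eq4} is literally identically zero (since $\operatorname{Tr}(ZZ^T)=\sum_j\operatorname{Tr}(Z_jZ_j^T)$ always), so the entire theorem lives or dies on the passage through Proposition~1 to the $\log\det$ form; the paper makes this substitution explicitly at the very first line of its proof.

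Where you diverge is in the mechanics of the two steps. For orthogonality you invoke a strict superadditivity inequality, whereas the paper argues by contradiction via an explicit SVD construction: assuming $(Z_{j_1}^*)^\top Z_{j_2}^*\neq 0$, it builds $Z'$ by keeping each block's singular values and right singular vectors but rotating the left singular vectors into mutually orthogonal frames (possible because $\sum_j d_j\le d$), and checks that the $\log\det$ objective strictly increases. For the rank step you appeal to KKT conditions and Hadamard's inequality, while the paper reduces to a one-variable problem in the squared singular values, defines $f(x)=\log\bigl((1+\tfrac{d}{n\epsilon^2}x)^n/(1+\tfrac{d}{n_j\epsilon^2}x)^{n_j}\bigr)$, and reads off the rank from the sign of $f''$ at $x=n_j/d_j$; the threshold $\epsilon^4<\tfrac{n_j}{n}\tfrac{d^2}{d_j^2}$ is exactly the condition $f''(n_j/d_j)<0$. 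Both routes are valid; the paper's are slightly more constructive and make the role of the $\epsilon$ threshold transparent, while yours are more abstract but would require you to actually state and prove the strict superadditivity of the $\log\det$ rate (which is not immediate and is usually itself proved by the same SVD trick).
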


Therefore, Eq.\ref{eq4} can promote embedding of data into multiple independent subspaces. These properties together ensure that the optimal embedding accurately reflects the data manifold in the feature space, which, in turn, corresponds to the data manifold in the image space. We provide the proof of this theorem in Appendix A.

\section{Experiments}
We conduct unsupervised generation experiments on six datasets: MNIST \cite{lecun1998gradient}, Cifar10 \cite{cifar}, CelebA-HQ \cite{celeba}, LSUN-cat, LSUN-bedroom and LSUN-church \cite{lsun}. We show the quantitative and qualitative results of our method, as well as the comparison with existing methods. The ablation studies of the components in our method are available in Appendix C.2.

\subsection{Experiment Setup}
Since our approach is plug-and-play, we can use pre-trained models from DDPM \cite{ho2020denoising}. The generation model for MNIST is finetuned on pre-trained Cifar10 model. We maintain a total of $N=1000$ steps with a linear variance schedule and employ three popular schedulers for sampling: DDPM \cite{ho2020denoising}, DDIM \cite{songdenoising} and PNDM \cite{liupseudo}, to ensure a comprehensive analysis. Our architecture includes a ResNet-based $\boldsymbol{F}$ and a three-layer MLP $\boldsymbol{g}$. We choose a batch size of 32 for MNIST and Cifar10 datasets, and 16 for other datasets. We opt for SimCLR \cite{simclr}, a self-supervised model \cite{guo2024self,qiang2022interventional}, as $\boldsymbol{E}_{pre}$. Please refer to the Appendix B.2 and C.2 for the rationale for these choices and detailed implementation.
To assess the performance of our method, we employ Frechet Inception Distance (FID) \cite{FID}, the mostly widely-used metric, for evaluating the quality and diversity of the generated images and include sFID \cite{sFID} to capture the spatial distributional similarity between images.

\begin{figure*}[htbp]
\centering
\includegraphics[width=0.9\textwidth]{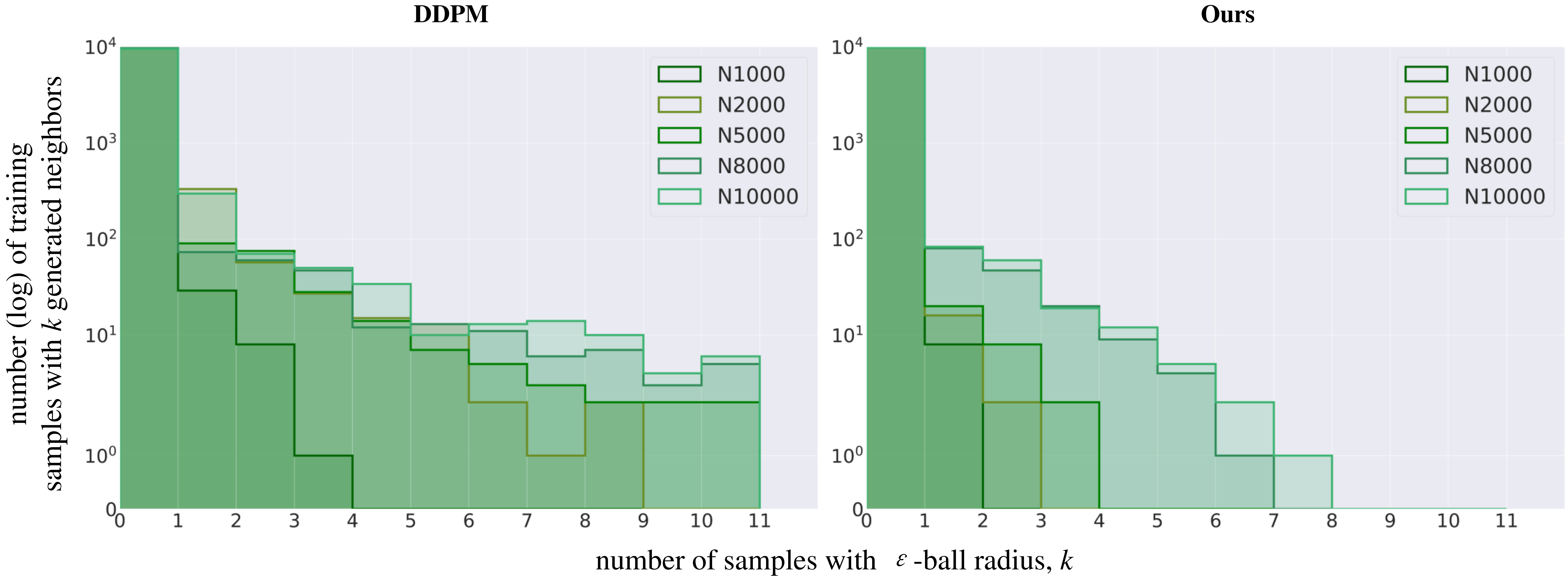}
\caption{Distribution of the number of MNIST samples with $k$ generated neighbors within an $\varepsilon$-ball radius. $N$ samples are generated using DDPM model (\textbf{left}) and our method (\textbf{right}). $\varepsilon$ is the average nearest neighbor distance for the MNIST samples.}
\label{fig5_1}
\end{figure*}

\begin{figure}[!h]
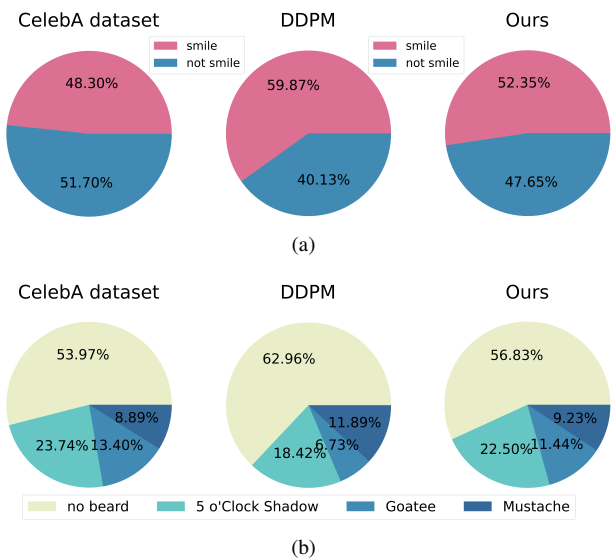

\centering
\subfigure[]{\includegraphics[width=0.46\textwidth]{pics/pie_smile1.pdf}}
\subfigure[]{\includegraphics[width=0.46\textwidth]{pics/pie_beard1.pdf}}
\caption{``Smile'' (a) and ``Beard'' (b) distribution in the CelebA dataset (\textbf{left}), generated data from DDPM using standard sampling (\textbf{middle}), and manifold-guided sampling (\textbf{right}).}
\label{fig_a}
\end{figure}

\begin{table}[htbp]
\centering
\caption{Comparison of FID $\downarrow$ on four diverse datasets (the \textcolor{red}{red} numbers present our improvement)}
\renewcommand\arraystretch{1.5}
\footnotesize
\begin{tabular}{cccccccccc}
\hline
Datasets                                       & Scheduler                                      &50                                         & 100                                       & 1000                                       \\ \hline
\multirow{3}{*}{\textbf{Cifar10}}                       & \textbf{DDPM}                                                           & 32.72\textcolor{red}{-4.22}                                     & 9.99\textcolor{red}{-3.12}                        & 3.17\textcolor{red}{-0.42}             \\
                                               & \textbf{DDIM}                                     &  6.99\textcolor{red}{-2.37}                     &  5.52\textcolor{red}{-2.24}                     &  4.00\textcolor{red}{-1.23}                                           \\
                                               & \textbf{PNDM}                                &  3.95\textcolor{red}{-0.85} & 3.72\textcolor{red}{-0.92} &  3.70\textcolor{red}{-0.78} \\ \hline
\multirow{3}{*}{\textbf{\makecell[c]{LSUN-\\church}}}                   & \textbf{DDPM }                                            & 10.84\textcolor{red}{-2.05}                                         & 10.58\textcolor{red}{-2.37}                                       & 7.89\textcolor{red}{-1.10}                                  \\
                                               & \textbf{DDIM}                           &  10.0\textcolor{red}{-2.58}                     &  9.84\textcolor{red}{-2.05}                     &  9.88\textcolor{red}{-1.12}                     &                      \\
                                               & \textbf{PNDM}                      & 9.89\textcolor{red}{-1.55} & 10.1\textcolor{red}{-1.19} & 10.1\textcolor{red}{-1.22}  \\ \hline
\multirow{3}{*}{\textbf{\makecell[c]{LSUN-\\bedroom}}}                  & \textbf{DDPM }                       & 10.81\textcolor{red}{-2.13}                                      & 6.81\textcolor{red}{-1.66}                           & 6.36\textcolor{red}{-1.23}                                         \\
                                               & \textbf{DDIM}                           &  6.05\textcolor{red}{-2.11}                     &  5.97\textcolor{red}{-1.54}                     &  6.32\textcolor{red}{-1.10}                                         \\
                                               & \textbf{PNDM }                    & 6.44\textcolor{red}{-1.08} & 6.91\textcolor{red}{-1.12}  & 6.92\textcolor{red}{-0.91}  \\ \hline
\multicolumn{1}{c}{\multirow{2}{*}{\textbf{CelebA-HQ}}} & \textbf{DDIM}                        & 8.95\textcolor{red}{-2.20}  & 6.36\textcolor{red}{-1.47}  & 3.41\textcolor{red}{-0.79}  \\
\multicolumn{1}{l}{}                           & \textbf{PNDM }                        & 3.34\textcolor{red}{-1.03}                                        & 2.81\textcolor{red}{-0.56}                                        & 2.86\textcolor{red}{-0.73}                         \\ \hline
\end{tabular}
\label{table1}
\end{table}

\subsection{Experiment Results}

First, we focus on the MNIST dataset, characterized by its relative simplicity and balanced distribution of samples across categories. We randomly select a datum and count the number of generated samples ($k$) that fall within the $\varepsilon$-ball neighborhood of this datum, where $\varepsilon$ represents the average nearest neighbor distance among real samples. This procedure is repeated across 10,000 real samples. If the value of $k$ remains consistent, it strongly suggests a uniform distribution of generated samples on the real data manifold. We perform this experiment on the original DDPM model and our method, with the number of generated samples $N$ ranging from 1,000 to 10,000. The distribution of $k$ is displayed in Fig.\ref{fig5_1}. A uniform distribution of generated samples across the data manifold corresponds to a consistent $k$ value across all real samples, resulting in a Dirac distribution in the histograms presented. It can be observed that our method tends towards such distribution, whereas the baseline model exhibits a heavy-tailed distribution of $k$. Please refer to Appendix C.1 for additional experiment results.

Next, we conduct a quantitative analysis on the CelebA dataset, which provides 202,599 face images with 40 distinct facial attributes. We focus our bias analysis on three attributes: gender, smile, and beard type. For each attribute, we generate 12,000 images, classify them using pre-trained models, and calculate the proportions for each category over 10 iterations to average out the results. It can be observed that the original DDPM model significantly favors certain attributes, such as female faces, as demonstrated in Fig.\ref{fig1}(b). In the ``smile'' attribute, with a more uniform distribution in the CelebA dataset (Fig.\ref{fig_a}(a)), DDPM tends to bias towards smiling faces. Similarly, for beard type (Fig.\ref{fig_a}(b)), DDPM exhibits noticeable preferences, while our method approaches the distribution of the training dataset more closely. While our method doesn't achieve completely unbiased generation, it demonstrates effectiveness in addressing bias compared to DDPM. Experiments on other attributes are provided in the Appendix C.1.

Lastly, we evaluate our method using FID metric across various datasets, time steps, and schedulers, as shown in Table \ref{table1}. The sFID results are available in the Appendix C.1. Notably, the FID scores improve with the application of our method, as indicated by the red numbers in the table. This improvement is consistent across different conditions, showcasing the effectiveness and adaptability of our method. Not requiring model retraining or labels, our approach presents a versatile and effective solution for generating unbiased and diverse images. Visual comparisons between images generated by the original DDPM and our method, highlighting the increased diversity and quality of our generated images, are available in Appendix C.3. These experiments collectively underscore the robustness and broad applicability of our approach.

\subsection{Ablation Study}
In this section, we provide ablation and analysis on the hyperparameters. We conduct ablation studies on the hyperparameter $\lambda$. We select the pre-trained DDPM model with DDIM scheduler and set the sampling time steps to 50. Table \ref{table4} presents the FID scores of the ablation study conducted on the Cifar10, CelebA-HQ, LSUN-church, and LSUN-bedroom datasets. We empirically set $\lambda = 1$ on Cifar10 dataset, $\lambda = 5$ on CelebA-HQ and LSUN-church datasets, $\lambda = 10$ on LSUN-bedroom dataset. 

\begin{table}
\centering
\caption{Ablation study on the hyperparameter $\lambda$}
\renewcommand\arraystretch{1.2}
\begin{tabular}{ccccc}
\hline
$\lambda$ & 1    & 5                         & 10   & 20 \\ \hline
\textbf{Cifar10}                & \textbf{4.62} & 5.79                               & 7.22          & 12.3        \\
\textbf{Church}            & 8.25          & \multicolumn{1}{c}{\textbf{7.42}} & 7.97          & 10.21       \\
\textbf{Bedroom}           & 5.33          & \multicolumn{1}{c}{4.69}          & \textbf{3.94} & 7.41        \\
\textbf{CelebA}              & 7.92          & \textbf{6.75}                      & 7.19          & 10.36       \\ \hline
\end{tabular}
\label{table4}
\end{table}

We also conduct ablation studies on the number of guidance steps. The experiment results on the Cifar10 dataset with 50 sampling time steps are shown in Table \ref{table5}. We apply $MGS$ only on the first few sampling steps. It can be seen that different schedulers have different optimal guidance steps. We choose the best guidance steps for each scheduler. More experiments on other datasets are avaliable in the Appendix C.2.

\begin{table}
\centering
\caption{Ablation study on the Cifar10 dataset}
\renewcommand\arraystretch{1.2}
\begin{tabular}{ccccc}
\hline
\textbf{Scheduler} & 5              & 10                         & 20    & 50    \\ \hline
\textbf{DDIM}      & \textbf{4.62}  & 5.33                       & 6.42  & 7.26  \\
\textbf{PNDM}      & \textbf{3.10}  & \multicolumn{1}{c}{3.21}  & 4.03  & 3.92  \\
\textbf{DDPM}      & \textbf{28.50} & \multicolumn{1}{c}{29.69} & 29.77 & 33.14 \\ \hline
\end{tabular}
\label{table5}
\end{table}

\section{Conclusion}
In this work, we focus on addressing the bias issue in diffusion models. We proposed $MGS$ to constraint the generated images to be uniformly distributed on the true data manifold. Notably, our method is unsupervised and plug-and-play. This is beneficial since diffusion models have significant computational and energy requirements for training. We provide theoretical and empirical evidence to show the effectiveness and versatility of our method.

\section*{Acknowledgment}
The authors would like to acknowledge the support and the collaboration effort of the project team. This work was supported by Postdoctoral Fellowship Program of CPSF (GZB20230790), China Postdoctoral Science Foundation (2023M743639), CAS Project for Young Scientists in Basic Research, Grant No.YSBR-040, and Special Research Assistant Fund (E3YD590101), Chinese Academy of Sciences.

\bibliographystyle{IEEEbib}
\bibliography{icme2023template}

\newpage
\appendix

The appendix provides additional details and supplementary material to support the main findings and methods presented in this paper. It is organized into several sections: Appendix A provides the proofs of all the theoretical results. Appendix B provides training details for all the experiments that were studied in the main text. Appendix C shows additional results and more visual results that were omitted in the main paper due to page limits. Appendix D introduces the related works and the novelty of our method.

\subsection{Proofs}

In this paper, we assume that the true data manifold comprises of a collection of low-dimensional nonlinear submanifolds $\cup_{j = 1}^{k} \mathcal{M}_{j} \subset \mathbb{R}^{D}$, where each submanifold $\mathcal{M}_j$ has a dimension $d_j<<D$. Accordingly, representations of images in the ideal feature space should have following properties: \textit{1) Between-Mode Discrepancy. Representations from different submanifolds should be highly uncorrelated; 2) Within-Mode Similarity. Representations from the same submanifold should be relatively correlated; 3) Maximally Variance. Representations should have as large dimension as possible to cover all the submanifolds and be variant for the same submanifold. }

We state our proofs below.
\begin{proposition}
Given finite samples $Z$ from a distribution $P(z)$, and $\mathbf{M}=\left[z^{1}, \ldots, z^{n}\right] \subset \mathbb{R}^{d \times n}$, the square of the Frobenius norm or $\mathrm{Tr}(\mathbf{M} \mathbf{M}^T)$ represents the compactness of this distribution.
\end{proposition}
\begin{proof}
\nonumber
In information theory, there are many methods that can be used to measure the “compactness” of a random distribution such as entropy and rate distortion. Among these methods, rate distortion is a more suitable choice than entropy for continuous random variables with degenerate distributions. Given a random variable $z$ and a prescribed precision $\epsilon > 0$, the rate distortion $R(z,\epsilon )$ is the minimal number of binary bits needed to encode $z$ such that the expected decoding error is less than $\epsilon$. Given finite samples $\mathbf{\mathbf{M}}=\left[z^{1}, \ldots, z^{m}\right] \subset \mathbb{R}^{d \times m}$ follow a subspace-like distribution, \cite{ma2007segmentation} provides a precise estimate on the number of binary bits needed to encoded these samples: 
\begin{eqnarray}
    \mathcal{R}(\boldsymbol{\mathbf{M}}, \epsilon) \doteq\left(\frac{m+d}{2}\right) \log \operatorname{det}\left(\boldsymbol{I}+\frac{d}{m \epsilon^{2}} \boldsymbol{\mathbf{M}} \boldsymbol{\mathbf{M}}^{T}\right).
\end{eqnarray}
Based on the first-order Taylor series approximation, $\log \operatorname{det}(\mathbf{C}+\mathbf{D}) \approx \log \operatorname{det}(\mathbf{C})+\operatorname{Tr}\left(\mathbf{D}^{T} \mathbf{C}^{-1}\right)$, we can get:
\begin{equation*}
   \begin{aligned}
    & \quad \frac{1}{2n} \boldsymbol{\operatorname { T r }}(\mathbf{M} \mathbf{M}^{T}) \\
    &= \frac{1}{2} ( \frac{1}{n} \boldsymbol{\operatorname { T r }}(\mathbf{M} \mathbf{M}^{T}) + \log \operatorname{det}(I))\\
    &\approx  \frac{1}{2} \log \operatorname{det}(I+ \frac{1}{n} \mathbf{M} \mathbf{M}^{T})
   \end{aligned}
\end{equation*}

As the sample size $m$ is large, our approach can be seen as an approximation of the rate distortion, which completes our proof. 

\end{proof}

\begin{theorem}
Suppose $\boldsymbol{Z}^{*}$ is the optimal solution that maximizes the objective function $\mathcal{L}_{M}(\boldsymbol{Z})=\frac{1}{2n} ( \boldsymbol{\operatorname { T r }}(\boldsymbol{Z} \boldsymbol{Z}^{T})- \boldsymbol{\operatorname { T r }}(\boldsymbol{Z} \boldsymbol{C} \boldsymbol{Z}^{T}))$. We have:

- Between-Mode Discrepancy: If the ambient space is adequately large, the subspaces are all orthogonal to each other, i.e.  $\left(\boldsymbol{Z}_{i}^{*}\right)^{\top} \boldsymbol{Z}_{j}^{*}=\mathbf{0}$  for $i \neq j$.

- Maximally Variance: If the coding precision is adequately high, i.e.,  $\epsilon^{4}<\min _{j}\left\{\frac{n_{j}}{n} \frac{d^{2}}{d_{j}^{2}}\right\}$, each subspace achieves its maximal dimension, i.e. $\operatorname{rank}\left(\boldsymbol{Z}_{j}^{*}\right)=d_{j}$ .
\end{theorem}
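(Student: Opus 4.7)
The plan is to begin by observing that the trace surrogate $\mathcal{L}_M$, under the hard-membership interpretation of $\{C^j\}$ used in the theorem (so $\sum_j C^j=I_n$), is identically zero on all $Z$: both $\operatorname{Tr}(ZZ^T)$ and $\sum_j\operatorname{Tr}(ZC^jZ^T)$ equal $\|Z\|_F^2$. Consequently the structural statements about orthogonal subspaces and the $\epsilon^4$ rank threshold cannot be extracted from $\mathcal{L}_M$ alone; Proposition~1's Taylor identity $\tfrac{1}{2n}\operatorname{Tr}(MM^T)\approx \tfrac12\log\det(I+\tfrac{1}{n\epsilon^2}MM^T)$ must be used to promote $\mathcal{L}_M$ to the coding-rate reduction $\Delta R(Z,\mathbf{C},\epsilon)=R(Z,\epsilon)-\sum_j\tfrac{n_j}{n}R(Z_j,\epsilon)$, and the structural analysis is carried out there. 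The precision $\epsilon$ is kept explicit throughout by that lifting.

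The first step is a block decomposition: permute columns of $Z$ by class membership so that $Z=[Z_1,\ldots,Z_k]$ with $Z_j\in\mathbb{R}^{d\times n_j}$, whence $ZC^jZ^T=Z_jZ_j^T$ and $ZZ^T=\sum_j Z_jZ_j^T$. For Between-Mode Discrepancy I would then use a subspace-rearrangement argument in place of singular-value differentiation. Fix the Gram matrices $Z_j^TZ_j$ (which fixes each $R(Z_j,\epsilon)$ and hence the whole second term of $\Delta R$) and optimize $R(Z,\epsilon)$ over orthogonal placements of the blocks in $\mathbb{R}^d$. The elementary two-block identity $\det(I+\alpha(v_1v_1^T+v_2v_2^T))=\det(I+\alpha v_1v_1^T)\det(I+\alpha v_2v_2^T)-\alpha^2\langle v_1,v_2\rangle^2$, extended by induction to higher-rank blocks via the matrix-determinant lemma, shows $R(Z,\epsilon)$ is maximized exactly when the column spaces of the $Z_j$ are mutually orthogonal. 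This is feasible whenever $d\geq\sum_j d_j$, and yields $(Z_i^*)^TZ_j^*=\mathbf{0}$ for $i\neq j$.

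For Maximally Variance I would exploit the block-orthogonality just obtained, which makes $\Delta R$ separate across blocks. On each block I study the concave scalar function $\psi_j(r)=\max_{\operatorname{rank}(Z_j)\leq r}\bigl[R(Z_j,\epsilon)\text{ weighted by the }R-\tfrac{n_j}{n}R_j\text{ split}\bigr]$ using Cauchy eigenvalue interlacing rather than a full SVD log-term expansion. The marginal benefit of activating one more singular direction in block $j$, sent as $\sigma\to 0^+$, reduces to a scalar derivative whose sign is governed entirely by the ratio of the global precision $d/(n\epsilon^2)$ to the per-class precision $d_j/(n_j\epsilon^2)$. A direct calculation then shows this marginal is strictly positive iff $\epsilon^4<\tfrac{n_j}{n}\tfrac{d^2}{d_j^2}$, so under that hypothesis every admissible direction in the $d_j$-dimensional subspace allotted to block $j$ is activated, giving $\operatorname{rank}(Z_j^*)=d_j$.

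The hard part will be the bridging step from the stated trace surrogate to a quantity sensitive to $\epsilon$: because $\mathcal{L}_M\equiv 0$ under hard membership, the entire $\epsilon^4$ threshold phenomenon lives in the next-order correction of Proposition~1's Taylor expansion, and one must be careful to argue on that correction rather than pretending to differentiate $\mathcal{L}_M$ itself. The two further technical difficulties are (i) extending the two-block determinant identity cleanly to an induction over $k$ higher-rank blocks, where one must track cross-interaction terms for full matrices rather than vectors, and (ii) the interlacing step in Step~3, where the $\epsilon^4$ threshold has to be traced exactly through the per-block derivative without losing any factor of $n_j/n$ or $d/d_j$.
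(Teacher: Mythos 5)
Your overall plan coincides with the paper's: both lift the degenerate trace surrogate to the log-det coding-rate form via Proposition 1 (your observation that $\mathcal{L}_M\equiv 0$ under hard membership is correct, and it is exactly why the paper's proof actually operates on the rate-reduction objective), and your Between-Mode argument --- fixing the per-block Grams and proving $\log\det\bigl(\boldsymbol{I}+\alpha\sum_j \boldsymbol{Z}_j\boldsymbol{Z}_j^{\top}\bigr)\le\sum_j\log\det\bigl(\boldsymbol{I}+\alpha \boldsymbol{Z}_j\boldsymbol{Z}_j^{\top}\bigr)$ with equality iff the blocks are mutually orthogonal, then re-placing the blocks on orthogonal subspaces when $d\ge\sum_j d_j$ --- is a sound, more self-contained substitute for the strict inequality the paper merely asserts (its Eq.~\ref{eq21}) before running the same replacement construction with orthonormal $\boldsymbol{U}_j^{\prime}$.

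The Maximally Variance step, however, has a genuine gap. After block-orthogonalization the per-block objective depends on $\boldsymbol{Z}_j$ only through its singular values via $f(x)=\log\bigl((1+\tfrac{d}{n\epsilon^{2}}x)^{n}/(1+\tfrac{d}{n_{j}\epsilon^{2}}x)^{n_{j}}\bigr)$ with $x=\sigma^{2}$, and a direct computation gives $f^{\prime}(0)=\tfrac{d}{\epsilon^{2}}-\tfrac{d}{\epsilon^{2}}=0$ for every $\epsilon$, while $f^{\prime}(x)>0$ for every $x>0$, again for every $\epsilon$. So the ``marginal benefit of activating one more singular direction as $\sigma\to0^{+}$'' is identically zero (and the first-order effect of shifting mass from an active direction to a new one is strictly negative), hence its sign cannot encode the threshold $\epsilon^{4}<\tfrac{n_{j}}{n}\tfrac{d^{2}}{d_{j}^{2}}$; no first-order argument at $\sigma\to0^{+}$ can produce it. The threshold enters only through second-order structure under the normalization constraint $\|\boldsymbol{Z}_j\|_{F}^{2}=n_{j}$ --- a constraint your proposal never states, yet without which the rank claim does not follow: $f^{\prime}$ is unimodal with turning point $x_{T}=\epsilon^{2}\sqrt{n n_{j}}/d$, and $\epsilon^{4}<\tfrac{n_{j}}{n}\tfrac{d^{2}}{d_{j}^{2}}$ is precisely the condition $f^{\prime\prime}(n_{j}/d_{j})<0$, which the paper uses to show that the budget-constrained optimum spreads over all $d_{j}$ directions (either all equal to $n_{j}/d_{j}$ or of the form $[x_{H},\ldots,x_{H},x_{L}]$ with $x_{L}>0$). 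You would need to replace your marginal-at-zero argument by this concavity/water-filling analysis at the balanced allocation; the appeal to Cauchy interlacing is also unnecessary, since unitary invariance of the determinants already reduces the block problem to its singular values.
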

\begin{proof}
\nonumber
Based on the \textbf{Proposition 1}, the objective function $\mathcal{L}_{M}$ can be rewritten as: 
\begin{equation}
\begin{split}
\mathcal{L}_{M}(\mathbf{Z}) & \approx  \frac{1}{2} \log \operatorname{det}\left(\boldsymbol{I}+\alpha \mathbf{Z} \mathbf{Z}^{T}\right) \\
& \quad -\sum_{j=1}^{k} \frac{\gamma_{j}}{2} \log \operatorname{det}\left(\boldsymbol{I}+\alpha_{j} \mathbf{Z} \boldsymbol{C}^{j} \mathbf{Z}^{T}\right)
\end{split}
\end{equation}
where $\alpha=\frac{d}{n \epsilon^{2}}, \alpha_{j}=\frac{d}{\operatorname{tr}\left(\boldsymbol{\Pi}^{j}\right) \epsilon^{2}}, \gamma_{j}=\frac{\operatorname{tr}\left(\boldsymbol{C}^{j}\right)}{n} \text { for } j=1, \ldots, k$. 

Let $\boldsymbol{Z}^{*}=\left[\boldsymbol{Z}_{1}^{*}, \ldots, \boldsymbol{Z}_{k}^{*}\right]$ be the optimal solution of $\mathcal{L}_{M}$.

Suppose that $\left(\boldsymbol{Z}_{j_{1}}^{*}\right)^{\top} \boldsymbol{Z}_{j_{2}}^{*} \neq \mathbf{0}$ for some $1 \leq j_{1}<j_{2} \leq k$. For the optimal solution $\boldsymbol{Z}^{*}$, there holds a strict inequality function. That is,
\begin{eqnarray}
\mathcal{L}_{M}\left(\boldsymbol{Z}^{*}\right)<\sum_{j=1}^{k} \frac{1}{2 n} \log \left(\frac{\operatorname{det}^{n}\left(\boldsymbol{I}+\frac{d}{n \epsilon^{2}} \boldsymbol{Z}_{j}^{*}\left(\boldsymbol{Z}_{j}^{*}\right)^{\top}\right)}{\operatorname{det}^{n_{j}}\left(\boldsymbol{I}+\frac{d}{n_{j} \epsilon^{2}} \boldsymbol{Z}_{j}^{*}\left(\boldsymbol{Z}_{j}^{*}\right)^{\top}\right)}\right)
\label{eq21}
\end{eqnarray}

Since $\sum_{j=1}^{k} d_{j} \leq d$, there exists $\left\{\boldsymbol{U}_{j}^{\prime} \in \mathbb{R}^{d \times d_{j}}\right\}_{j=1}^{k}$  such that the columns of the matrix $\left[\boldsymbol{U}_{1}^{\prime}, \ldots, \boldsymbol{U}_{k}^{\prime}\right]$ are orthonormal. Denote $\boldsymbol{Z}_{j}^{*}=\boldsymbol{U}_{j}^{*} \boldsymbol{\Sigma}_{j}^{*}\left(\boldsymbol{V}_{j}^{*}\right)^{\top}$ the compact SVD of $\boldsymbol{Z}_{j}^{*}$, and let $\boldsymbol{Z}^{\prime}=\left[\boldsymbol{Z}_{1}^{\prime}, \ldots, \boldsymbol{Z}_{k}^{\prime}\right]$, where  $\boldsymbol{Z}_{j}^{\prime}=\boldsymbol{U}_{j}^{\prime} \boldsymbol{\Sigma}_{j}^{*}\left(\boldsymbol{V}_{j}^{*}\right)^{\top}$.

It follows that:

$\left(\boldsymbol{Z}_{j_{1}}^{\prime}\right)^{\top} \boldsymbol{Z}_{j_{2}}^{\prime}=\boldsymbol{V}_{j_{1}}^{*} \boldsymbol{\Sigma}_{j_{1}}^{*}\left(\boldsymbol{U}_{j_{1}}^{\prime}\right)^{\top} \boldsymbol{U}_{j_{2}}^{\prime} \boldsymbol{\Sigma}_{j_{2}}^{*}\left(\boldsymbol{V}_{j_{2}}^{*}\right)^{\top}=\boldsymbol{V}_{j_{1}}^{*} \boldsymbol{\Sigma}_{j_{1}}^{*} \mathbf{0} \boldsymbol{\Sigma}_{j_{2}}^{*}\left(\boldsymbol{V}_{j_{2}}^{*}\right)^{\top}=\mathbf{0}$ for all $1 \leq j_{1}<j_{2} \leq k$.

That is, the matrices $\boldsymbol{Z}_{1}^{\prime}, \ldots, \boldsymbol{Z}_{k}^{\prime}$ are pairwise orthogonal. For $\boldsymbol{Z}^{\prime}$, we can attain that
\begin{eqnarray}
\begin{aligned}
\mathcal{L}_{M}\left(\boldsymbol{Z}^{\prime}\right) & =\sum_{j=1}^{k} \frac{1}{2 n} \log \left(\frac{\operatorname{det}^{n}\left(\boldsymbol{I}+\frac{d}{n \epsilon^{2}} \boldsymbol{Z}_{j}^{\prime}\left(\boldsymbol{Z}_{j}^{\prime}\right)^{\top}\right)}{\operatorname{det}^{n_{j}}\left(\boldsymbol{I}+\frac{d}{n_{j} \epsilon^{2}} \boldsymbol{Z}_{j}^{\prime}\left(\boldsymbol{Z}_{j}^{\prime}\right)^{\top}\right)}\right) \\
& =\sum_{j=1}^{k} \frac{1}{2 n} \log \left(\frac{\operatorname{det}^{n}\left(\boldsymbol{I}+\frac{d}{n \epsilon^{2}} \boldsymbol{Z}_{j}^{*}\left(\boldsymbol{Z}_{j}^{*}\right)^{\top}\right)}{\operatorname{det}^{n_{j}}\left(\boldsymbol{I}+\frac{d}{n_{j} \epsilon^{2}} \boldsymbol{Z}_{j}^{*}\left(\boldsymbol{Z}_{j}^{*}\right)^{\top}\right)}\right)
\label{eq22}
\end{aligned}
\end{eqnarray}
Comparing Eq.\ref{eq21} and Eq.\ref{eq22} gives $\mathcal{L}_{M}\left(\boldsymbol{Z}^{\prime}\right)>\mathcal{L}_{M}\left(\boldsymbol{Z}^{*}\right)$, which contradicts the optimality of  $\boldsymbol{Z}^{*}$. Therefore, we must have
$\left(\boldsymbol{Z}_{j_{1}}^{*}\right)^{\top} \boldsymbol{Z}_{j_{2}}^{*}=\mathbf{0}$ for all $1 \leq j_{1}<j_{2} \leq k$

Moreover, we have
\begin{eqnarray}
\mathcal{L}_{M}\left(\boldsymbol{Z}^{*}\right)=\sum_{j=1}^{k} \frac{1}{2 n} \log \left(\frac{\operatorname{det}^{n}\left(\boldsymbol{I}+\frac{d}{n \epsilon^{2}} \boldsymbol{Z}_{j}^{*}\left(\boldsymbol{Z}_{j}^{*}\right)^{\top}\right)}{\operatorname{det}^{n_{j}}\left(\boldsymbol{I}+\frac{d}{n_{j} \epsilon^{2}} \boldsymbol{Z}_{j}^{*}\left(\boldsymbol{Z}_{j}^{*}\right)^{\top}\right)}\right)
\label{eq23}
\end{eqnarray}
We now prove the result concerning the singular values of $Z_{j}^{*}$. Suppose that there exists $\widetilde{\boldsymbol{Z}}_{j}$ such that $\left\|\widetilde{\boldsymbol{Z}}_{j}\right\|_{F}^{2}=m_{j}, \operatorname{rank}\left(\widetilde{\boldsymbol{Z}}_{j}\right) \leq d_{j}$ and
\begin{eqnarray}
\begin{split}
& \log \left(\frac{\operatorname{det}^{n}\left(\boldsymbol{I}+\frac{d}{n \epsilon^{2}} \widetilde{\boldsymbol{Z}}_{j} \widetilde{\boldsymbol{Z}}_{j}^{\top}\right)}{\operatorname{det}^{n_{j}}\left(\boldsymbol{I}+\frac{d}{n_{j} \epsilon^{2}} \widetilde{\boldsymbol{Z}}_{j} \widetilde{\boldsymbol{Z}}_{j}^{\top}\right)}\right) \\
& >\log \left(\frac{\operatorname{det}^{n}\left(\boldsymbol{I}+\frac{d}{n \epsilon^{2}} \boldsymbol{Z}_{j}^{*}\left(\boldsymbol{Z}_{j}^{*}\right)^{\top}\right)}{\operatorname{det}^{n_{j}}\left(\boldsymbol{I}+\frac{d}{n_{j} \epsilon^{2}} \boldsymbol{Z}_{j}^{*}\left(\boldsymbol{Z}_{j}^{*}\right)^{\top}\right)}\right)
\end{split}
\label{eq24}
\end{eqnarray}

Denote  $\widetilde{\boldsymbol{Z}}_{j}=\widetilde{\boldsymbol{U}}_{j} \widetilde{\boldsymbol{\Sigma}}_{j} \tilde{\boldsymbol{V}}_{j}^{\top}$ the compact  $\mathrm{SVD}$  of $\widetilde{\boldsymbol{Z}}_{j}$ and let $\boldsymbol{Z}^{\prime}=\left[\boldsymbol{Z}_{1}^{*}, \ldots, \boldsymbol{Z}_{j-1}^{*}, \boldsymbol{Z}_{j}^{\prime}, \boldsymbol{Z}_{j+1}^{*}, \ldots, \boldsymbol{Z}_{k}^{*}\right]$, where $ \boldsymbol{Z}_{j}^{\prime}:=\boldsymbol{U}_{j}^{*} \widetilde{\boldsymbol{\Sigma}}_{j} \tilde{\boldsymbol{V}}_{j}^{\top}.$

Note that  $\left\|\boldsymbol{Z}_{j}^{\prime}\right\|_{F}^{2}=m_{j}, \operatorname{rank}\left(\boldsymbol{Z}_{j}^{\prime}\right) \leq d_{j}$ and  $\left(\boldsymbol{Z}_{j}^{\prime}\right)^{\top} \boldsymbol{Z}_{j^{\prime}}^{*}=\mathbf{0}$  for all $j^{\prime} \neq j$. It follows that  $\boldsymbol{Z}^{\prime}$ is a feasible solution to $\mathcal{L}_{M}$ and that the components of $\boldsymbol{Z}^{\prime}$ are pairwise orthogonal. By using Eq.\ref{eq24}, we have
\begin{eqnarray}
\begin{aligned}
  & \quad \mathcal{L}_{M}\left(\boldsymbol{Z}^{\prime}\right) \\
  &=\frac{1}{2 n} \log \left(\frac{\operatorname{det}^{n}\left(\boldsymbol{I}+\frac{d}{n \epsilon^{2}} \boldsymbol{Z}_{j}^{\prime}\left(\boldsymbol{Z}_{j}^{\prime}\right)^{\top}\right)}{\operatorname{det}^{n_{j}}\left(\boldsymbol{I}+\frac{d}{n_{j} \epsilon^{2}} \boldsymbol{Z}_{j}^{\prime}\left(\boldsymbol{Z}_{j}^{\prime}\right)^{\top}\right)}\right)  \\ 
  & \quad +\sum_{j^{\prime} \neq j} \frac{1}{2 n} \log \left(\frac{\operatorname{det}^{n}\left(\boldsymbol{I}+\frac{d}{n \epsilon^{2}} \boldsymbol{Z}_{j^{\prime}}^{*}\left(\boldsymbol{Z}_{j^{\prime}}^{*}\right)^{\top}\right)}{\operatorname{det}^{n_{j^{\prime}}}\left(\boldsymbol{I}+\frac{d}{n_{j^{\prime} \epsilon^{2}}} \boldsymbol{Z}_{j^{\prime}}^{*}\left(\boldsymbol{Z}_{j^{\prime}}^{*}\right)^{\top}\right)}\right)  \\
  &=\frac{1}{2 n} \log \left(\frac{\operatorname{det}^{n}\left(\boldsymbol{I}+\frac{d}{n \epsilon^{2}} \widetilde{\boldsymbol{Z}}_{j}\left(\widetilde{\boldsymbol{Z}}_{j}\right)^{\top}\right)}{\operatorname{det}^{n_{j}}\left(\boldsymbol{I}+\frac{d}{n_{j} \epsilon^{2}} \widetilde{\boldsymbol{Z}}_{j}\left(\widetilde{\boldsymbol{Z}}_{j}\right)^{\top}\right)}\right) \\
  & \quad +\sum_{j^{\prime} \neq j} \frac{1}{2 n} \log \left(\frac{\operatorname{det}^{n}\left(\boldsymbol{I}+\frac{d}{n \epsilon^{2}} \boldsymbol{Z}_{j^{\prime}}^{*}\left(\boldsymbol{Z}_{j^{\prime}}^{*}\right)^{\top}\right)}{\operatorname{det}^{n_{j^{\prime}}}\left(\boldsymbol{I}+\frac{d}{n_{j^{\prime} \epsilon^{2}}} \boldsymbol{Z}_{j^{\prime}}^{*}\left(\boldsymbol{Z}_{j^{\prime}}^{*}\right)^{\top}\right)}\right)  \\
  &>\frac{1}{2 n} \log \left(\frac{\operatorname{det}^{n}\left(\boldsymbol{I}+\frac{d}{n \epsilon^{2}} \boldsymbol{Z}_{j}^{*}\left(\boldsymbol{Z}_{j}^{*}\right)^{\top}\right)}{\operatorname{det}^{n_{j}}\left(\boldsymbol{I}+\frac{d}{n_{j} \epsilon^{2}} \boldsymbol{Z}_{j}^{*}\left(\boldsymbol{Z}_{j}^{*}\right)^{\top}\right)}\right)  \\
  & \quad +\sum_{j^{\prime} \neq j} \frac{1}{2 n} \log \left(\frac{\operatorname{det}^{n}\left(\boldsymbol{I}+\frac{d}{n \epsilon^{2}} \boldsymbol{Z}_{j^{\prime}}^{*}\left(\boldsymbol{Z}_{j^{\prime}}^{*}\right)^{\top}\right)}{\operatorname{det}^{n_{j^{\prime}}}\left(\boldsymbol{I}+\frac{d}{n_{j^{\prime}} \epsilon^{2}} \boldsymbol{Z}_{j^{\prime}}^{*}\left(\boldsymbol{Z}_{j^{\prime}}^{*}\right)^{\top}\right)}\right) \\
  &=\sum_{j=1}^{k} \frac{1}{2 n} \log \left(\frac{\operatorname{det}^{n}\left(\boldsymbol{I}+\frac{d}{n \epsilon^{2}} \boldsymbol{Z}_{j}^{*}\left(\boldsymbol{Z}_{j}^{*}\right)^{\top}\right)}{\operatorname{det}^{n_{j}}\left(\boldsymbol{I}+\frac{d}{n_{j} \epsilon^{2}} \boldsymbol{Z}_{j}^{*}\left(\boldsymbol{Z}_{j}^{*}\right)^{\top}\right)}\right)
\end{aligned}
\end{eqnarray}

Combining it with Eq.\ref{eq23} shows $\mathcal{L}_{M}\left(\boldsymbol{Z}^{\prime}\right)>\mathcal{L}_{M}\left(\boldsymbol{Z}^{*}\right)$, contradicting the optimality of  $\boldsymbol{Z}^{*}$. Therefore, the following result holds:
\begin{eqnarray}
\begin{split}
& \boldsymbol{Z}_{j}^{*} \in \arg \max _{\boldsymbol{Z}_{j}} \log \left(\frac{\operatorname{det}^{n}\left(\boldsymbol{I}+\frac{d}{n \epsilon^{2}} \boldsymbol{Z}_{j} \boldsymbol{Z}_{j}^{\top}\right)}{\operatorname{det}^{n_{j}}\left(\boldsymbol{I}+\frac{d}{n_{j} \epsilon^{2}} \boldsymbol{Z}_{j} \boldsymbol{Z}_{j}^{\top}\right)}\right) \\
& \text { s.t. }\left\|\boldsymbol{Z}_{j}\right\|_{F}^{2}=n_{j}, \operatorname{rank}\left(\boldsymbol{Z}_{j}\right) \leq d_{j}
\end{split}
\label{eq25}
\end{eqnarray}

Observe that the optimization problem in Eq.\ref{eq25} depends on $\boldsymbol{Z}_{j}$  only through its singular values. That is, by letting  $\boldsymbol{\sigma}_{j}:=\left[\sigma_{1, j}, \ldots, \sigma_{\min \left(m_{j}, d\right), j}\right]$ be the singular values of  $\boldsymbol{Z}_{j}$, we have
\begin{eqnarray}
\begin{split}
& \log \left(\frac{\operatorname{det}^{n}\left(\boldsymbol{I}+\frac{d}{n \epsilon^{2}} \boldsymbol{Z}_{j} \boldsymbol{Z}_{j}^{\top}\right)}{\operatorname{det}^{n_{j}}\left(\boldsymbol{I}+\frac{d}{n_{j} \epsilon^{2}} \boldsymbol{Z}_{j} \boldsymbol{Z}_{j}^{\top}\right)}\right) \\
& =\sum_{p=1}^{\min \left\{n_{j}, d\right\}} \log \left(\frac{\left(1+\frac{d}{n \epsilon^{2}} \sigma_{p, j}^{2}\right)^{n}}{\left(1+\frac{d}{n_{j} \epsilon^{2}} \sigma_{p, j}^{2}\right)^{n_{j}}}\right)
\end{split}
\end{eqnarray}
also, we have $\left\|\boldsymbol{Z}_{j}\right\|_{F}^{2}=\sum_{p=1}^{\min \left\{m_{j}, d\right\}} \sigma_{p, j}^{2}$ and $\operatorname{rank}\left(\boldsymbol{Z}_{j}\right)=\left\|\boldsymbol{\sigma}_{j}\right\|_{0}$.

Using these relations, Eq.\ref{eq25} is equivalent to
\begin{eqnarray}
\begin{split}
& \max _{\boldsymbol{\sigma}_{j} \in \mathbb{R}_{+}^{\min \left\{n_{j}, d\right\}}} \sum_{p=1}^{\min \left\{n_{j}, d\right\}} \log \left(\frac{\left(1+\frac{d}{n \epsilon^{2}} \sigma_{p, j}^{2}\right)^{n}}{\left(1+\frac{d}{n_{j} \epsilon^{2}} \sigma_{p, j}^{2}\right)^{n_{j}}}\right) \\
& \text { s.t. } \sum_{p=1}^{\min \left\{n_{j}, d\right\}} \sigma_{p, j}^{2}=n_{j} \text {, and } \operatorname{rank}\left(\boldsymbol{Z}_{j}\right)=\left\|\boldsymbol{\sigma}_{j}\right\|_{0}
\end{split}
\label{eq26}
\end{eqnarray}

Let  $\boldsymbol{\sigma}_{j}^{*}=\left[\sigma_{1, j}^{*}, \ldots, \sigma_{\min \left\{n_{j}, d\right\}, j}^{*}\right]$ be an optimal solution to Eq.\ref{eq26}. We assume that the entries of  $\sigma_{j}^{*}$ are sorted in descending order. It follows that
$\sigma_{p, j}^{*}=0 \text { for all } p>d_{j}$
and
\begin{eqnarray}
\begin{split}
& \left[\sigma_{1, j}^{*}, \ldots, \sigma_{d_{j}, j}^{*}\right] \\
& =\underset{\substack{\left[\sigma_{1, j}, \ldots, \sigma_{d_{j}, j}\right] \in \mathbb{R}_{+}^{d_{j}} \\ \sigma_{1, j} \geq \cdots \geq \sigma_{d_{j}, j}}}{\arg \max } \sum_{p=1}^{d_{j}} \log \left(\frac{\left(1+\frac{d}{n \epsilon^{2}} \sigma_{p, j}^{2}\right)^{n}}{\left(1+\frac{d}{n_{j} \epsilon^{2}} \sigma_{p, j}^{2}\right)^{n_{j}}}\right) \\
& \text { s.t. } \sum_{p=1}^{d_{j}} \sigma_{p, j}^{2}=n_{j}
\end{split}
\label{eq27}
\end{eqnarray}

Then we define
\begin{eqnarray}
f\left(x ; d, \epsilon, n_{j}, n\right)=\log \left(\frac{\left(1+\frac{d}{n \epsilon^{2}} x\right)^{n}}{\left(1+\frac{d}{n_{j} \epsilon^{2}} x\right)^{n_{j}}}\right),
\end{eqnarray}
and rewrite Eq.\ref{eq27} as
\begin{eqnarray}
\max _{\substack{\left[x_{1}, \ldots, x_{d_{j}}\right] \in \mathbb{R}_{+}^{d_{j}} \\ x_{1} \geq \cdots \geq x_{d_{j}}}} \sum_{p=1}^{d_{j}} f\left(x_{p} ; d, \epsilon, n_{j}, n\right) \text { s.t. } \sum_{p=1}^{d_{j}} x_{p}=n_{j} .
\label{eq28}
\end{eqnarray}
We compute the first and second derivative for  $f$ with respect to $x$, which are given by
\begin{eqnarray}
\begin{aligned}
f^{\prime}\left(x ; d, \epsilon, n_{j}, n\right) & =\frac{d^{2} x\left(n-n_{j}\right)}{\left(d x+n \epsilon^{2}\right)\left(d x+n_{j} \epsilon^{2}\right)} \\
f^{\prime \prime}\left(x ; d, \epsilon, n_{j}, n\right) & =\frac{d^{2}\left(n-n_{j}\right)\left(n n_{j} \epsilon^{4}-d^{2} x^{2}\right)}{\left(d x+n \epsilon^{2}\right)^{2}\left(d x+n_{j} \epsilon^{2}\right)^{2}}
\end{aligned}
\end{eqnarray}

Note that: 1) $0=f^{\prime}(0)<f^{\prime}(x)$ for all $x>0$; 2) $f^{\prime}(x)$ is strictly increasing in $\left[0, x_{T}\right]$ and strictly decreasing in $\left[x_{T}, \infty\right)$, where $x_{T}= \epsilon^{2} \sqrt{\frac{n}{d} \frac{n_{j}}{d}}$, and 3) by using the condition $\epsilon^{4}<\frac{n_{j}}{n} \frac{d^{2}}{d_{j}^{2}}$, we have  $f^{\prime \prime}\left(\frac{n_{j}}{d_{j}}\right)<0$.

Therefore, we can conclude that the unique optimal solution to Eq.\ref{eq28} is either $\boldsymbol{x}^{*}=\left[\frac{n_{j}}{d_{j}}, \ldots, \frac{n_{j}}{d_{j}}\right]$, or $\boldsymbol{x}^{*}=\left[x_{H}, \ldots, x_{H}, x_{L}\right]$ for some $x_{H} \in\left(\frac{n_{j}}{d_{j}}, \frac{n_{j}}{d_{j}-1}\right)$ and $x_{L}>0$.

Equivalently, we have either $\left[\sigma_{1, j}^{*}, \ldots, \sigma_{d_{j}, j}^{*}\right]=\left[\sqrt{\frac{n_{j}}{d_{j}}}, \ldots, \sqrt{\frac{n_{j}}{d_{j}}}\right]$, or $\left[\sigma_{1, j}^{*}, \ldots, \sigma_{d_{j}, j}^{*}\right]=\left[\sigma_{H}, \ldots, \sigma_{H}, \sigma_{L}\right]$ for some $\sigma_{H} \in\left(\sqrt{\frac{n_{j}}{d_{j}}}, \sqrt{\frac{n_{j}}{d_{j}-1}}\right)$  and $\sigma_{L}>0$, which complete our proof.

\end{proof}

\subsection{Experimental Details}

\subsubsection{Training Setting}
We mainly use ResNet with 4 residual blocks, which have layer widths [64, 128, 256, 512], in our experiments for network $\boldsymbol{F}$. We use ResNet18 for MNIST and Cifar10, and ResNet50 for other datasets. We use three-layer MLP for network $\boldsymbol{g}$. Across all of our experiments, we use SGD as the optimizer for $\boldsymbol{F}$ and Adam for $\boldsymbol{g}$. We adopt the simple gradient descent–ascent algorithm. We train the models for 800 epochs and use stage-wise learning rate decay every 200 epochs (decay by a factor of 10). We train the models with the full training datasets and on eight NVIDIA RTX A6000 GPUs. 

\subsubsection{Computation Time}
All our sampling steps are performed with a single NVIDIA RTX A6000 GPU. To generate 4 images of size 256 × 256, our method takes about 10 seconds for 25 time steps, 20 seconds for 50 time steps and 300 seconds for 1000 time steps. We conduct ablation study on the batch size of the sampling process. The results are shown in Table \ref{table_b}. In our experiments, given the computational power and time constraints, we have chosen a batch size of 32 for MNIST and Cifar10 datasets, and 16 for other datasets.

\begin{table}
\centering
\caption{Ablation study on the Cifar10 dataset}
\renewcommand\arraystretch{1.2}
\begin{tabular}{cccccc}
\hline
\textbf{Batch} & \textbf{4} & \textbf{8} & \textbf{16} & \textbf{32} & \textbf{64} \\ \hline
\textbf{FID$\downarrow$}   & 4.33       & 4.09       & 3.28        & \textbf{3.14}        & 3.21        \\ \hline
\end{tabular}
\label{table_b}
\end{table}

\subsection{Additional Experimental Results}

\begin{figure*}[h]
\centering
\includegraphics[width=0.9\textwidth]{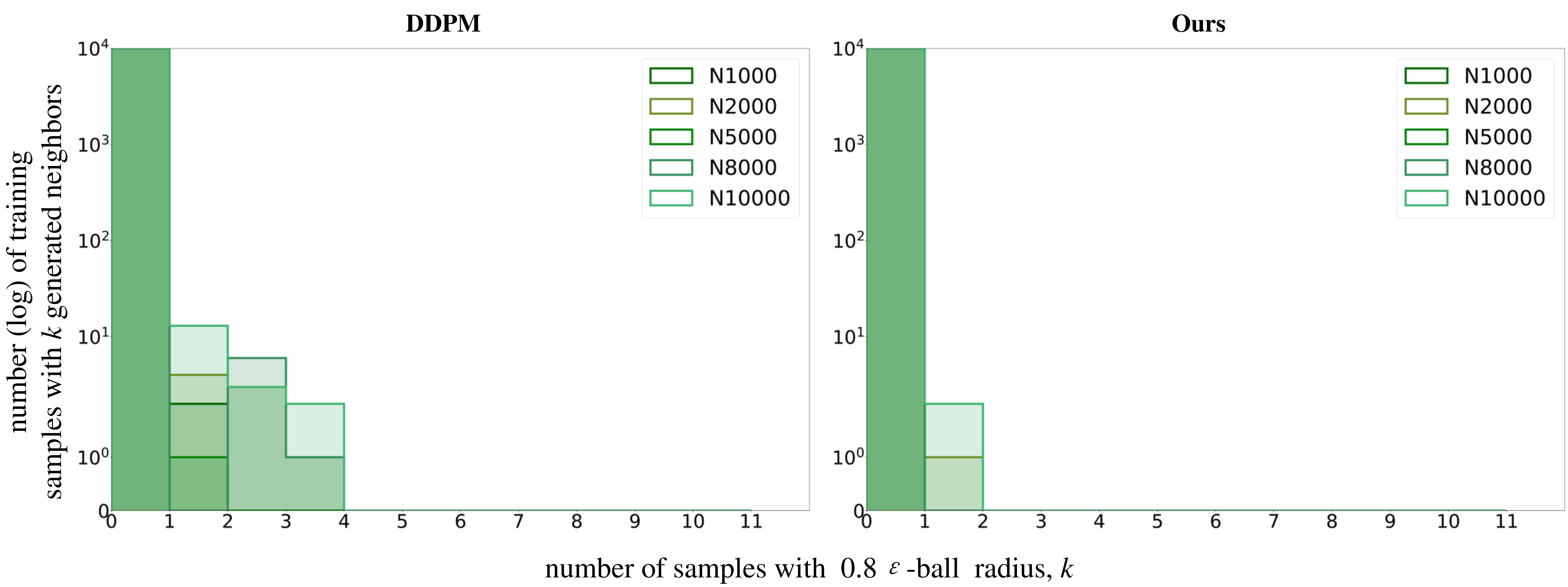}
\caption{Distribution of the number of MNIST training samples with $k$ neighbors generated within a 0.8$\varepsilon$-ball radius. Left shows the results of DDPM. Right shows the results of our method.}
\label{fig3_1}
\end{figure*}

\begin{figure*}[h]
\centering
\includegraphics[width=0.9\textwidth]{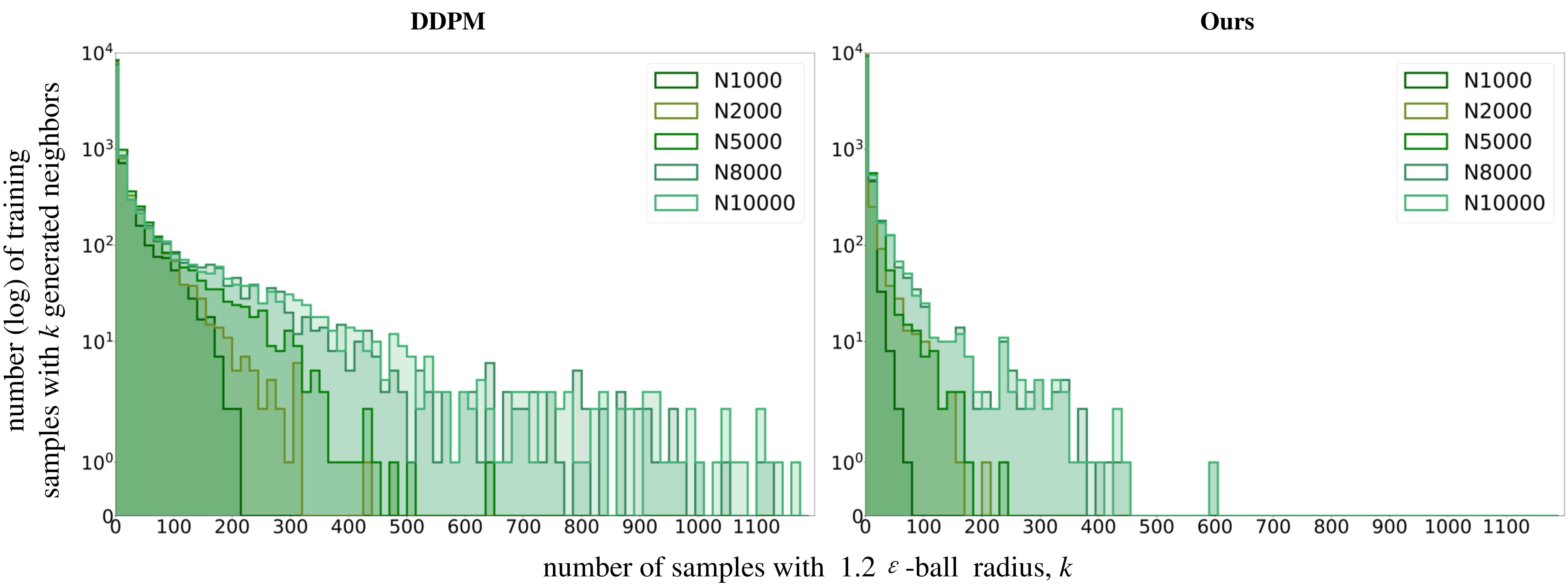}
\caption{Distribution of the number of MNIST training samples with $k$ neighbors generated within a 1.2$\varepsilon$-ball radius. Left shows the results of DDPM. Right shows the results of our method.}
\label{fig3_2}
\end{figure*}

\subsubsection{Additional Results} 
We conduct the same experiment as Fig.3 on the MNIST dataset, with 0.8$\varepsilon$-ball radius and 1.2$\varepsilon$-ball radius. The experiment results are shown in Fig.\ref{fig3_1} and Fig.\ref{fig3_2}.
\begin{figure}[htbp]
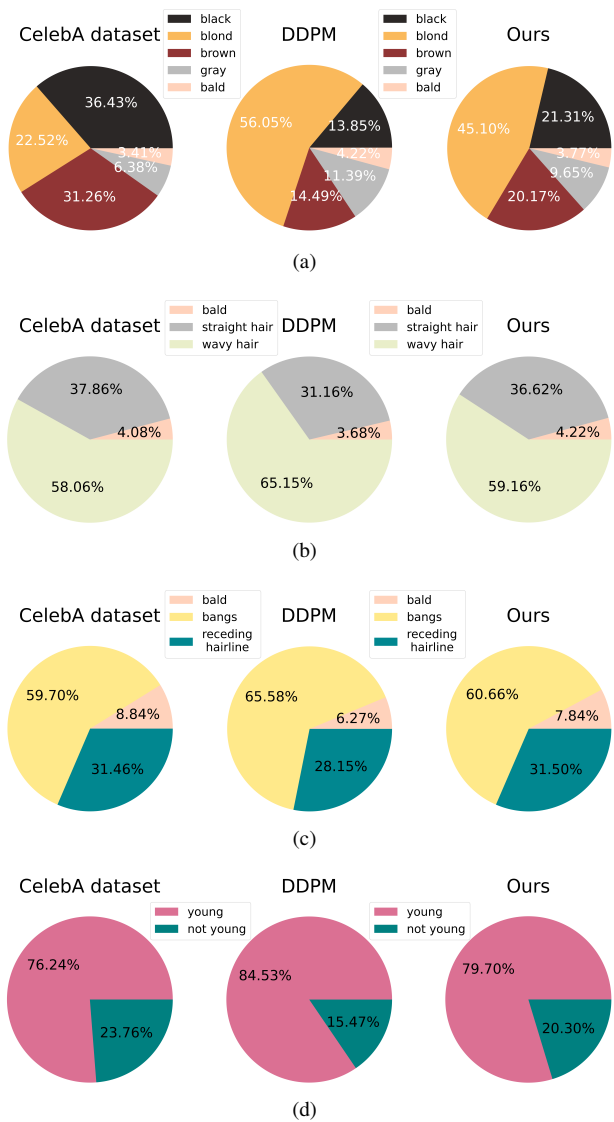

\centering
\subfigure[]{\includegraphics[width=0.46\textwidth]{pics/pie_haircolor1.pdf}}
\subfigure[]{\includegraphics[width=0.46\textwidth]{pics/pie_hairstyle.pdf}}
\subfigure[]{\includegraphics[width=0.46\textwidth]{pics/pie_hairbangs.pdf}}
\subfigure[]{\includegraphics[width=0.46\textwidth]{pics/pie_young.pdf}}
\caption{``Hair Color'' (a), ``Hair Style'' (b), ``Bangs Type'' (c) and ``Young'' (d) distributions in the CelebA dataset (\textbf{left}), generated data from DDPM using standard sampling (\textbf{middle}), and manifold-guided sampling (\textbf{right})}
\label{fig_a1}
\end{figure}

\begin{figure}
    \centering
    \begin{minipage}{0.23\textwidth}
        \centering
        \includegraphics[width=\linewidth]{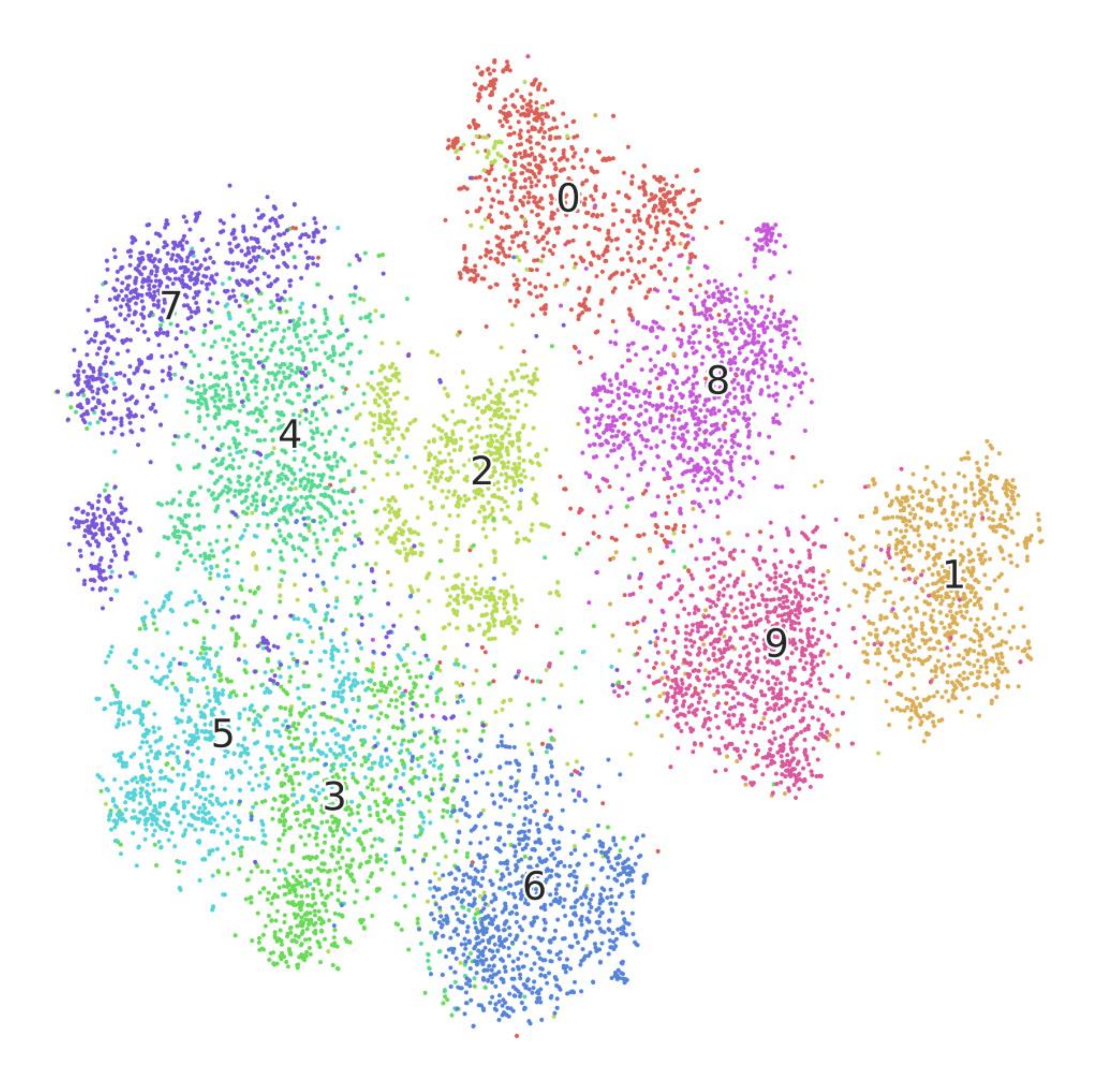}
        \caption*{(a) Training Data}
        \label{fig5:miniImagenet}
    \end{minipage}
    \hfill
    \begin{minipage}{0.23\textwidth}
        \centering
        \includegraphics[width=\linewidth]{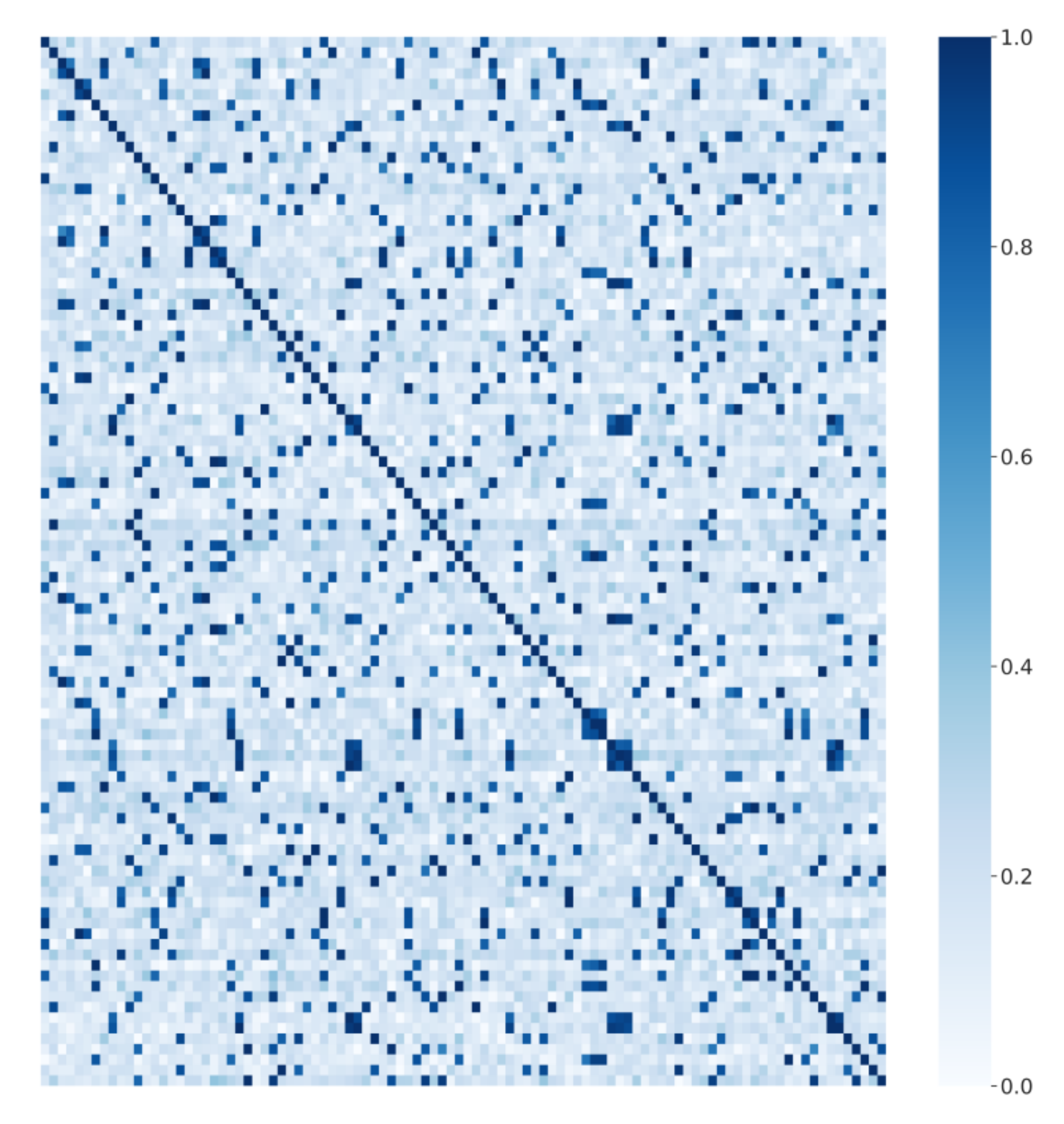}
        \caption*{(b) $\mathbf{R}$ matrix}
        \label{fig15Omniglot}
    \end{minipage}
    
    \begin{minipage}{0.23\textwidth}
        \centering
        \includegraphics[width=\linewidth]{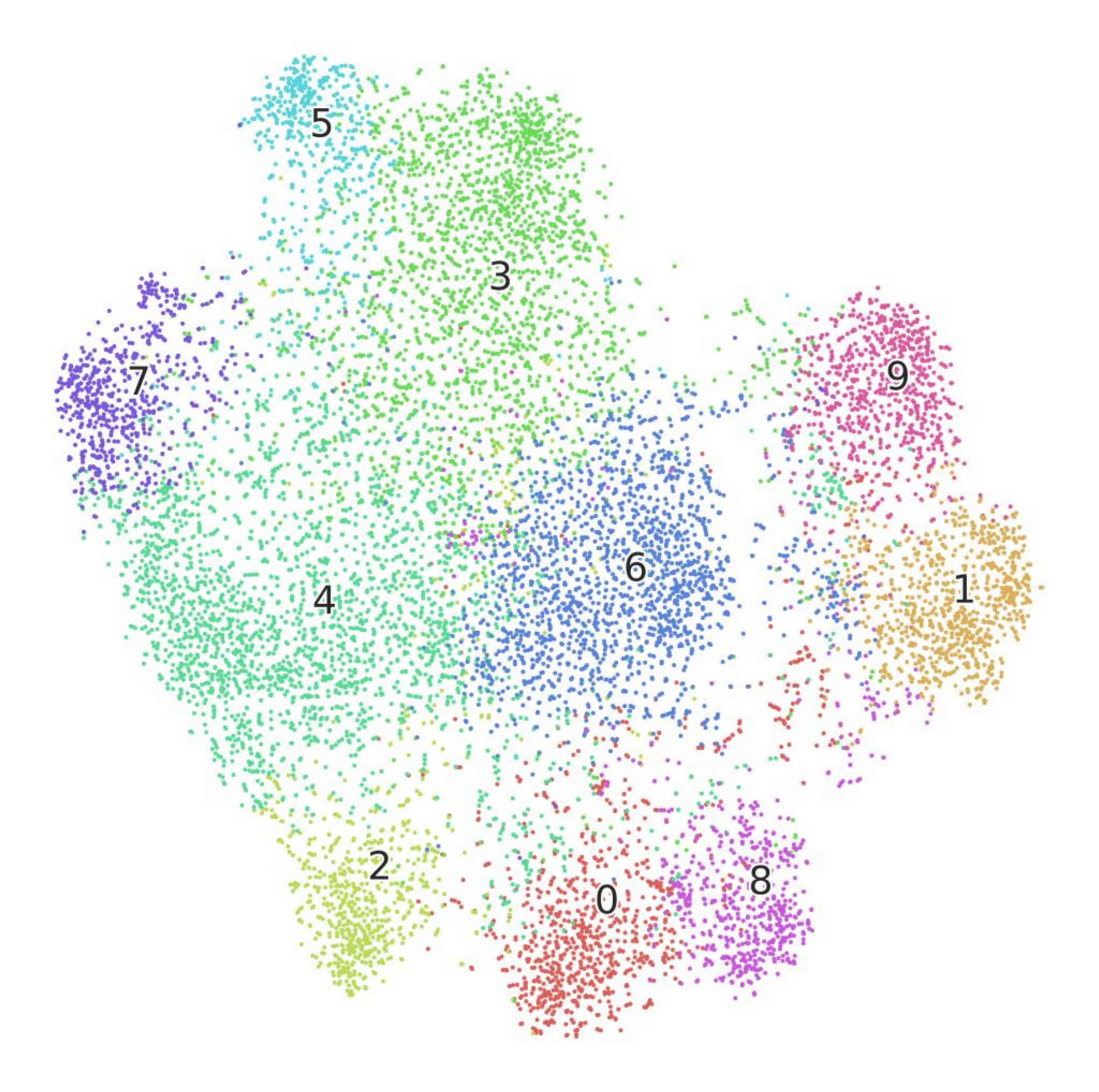}
        \caption*{(c) DDPM}
        \label{fig5:tieredImagenet}
    \end{minipage}
    \hfill
    \begin{minipage}{0.23\textwidth}
        \centering
        \includegraphics[width=\linewidth]{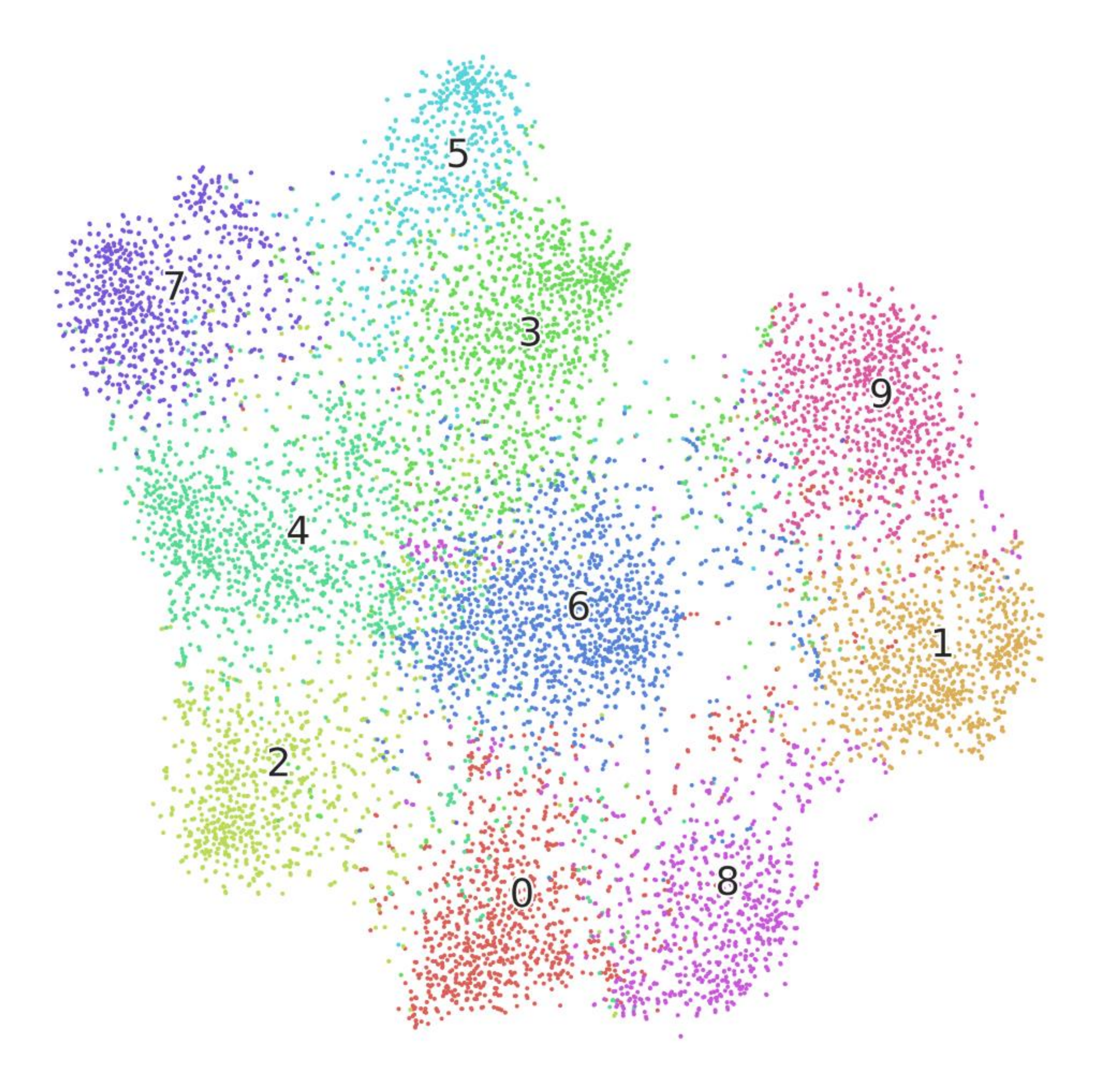}
        \caption*{(d) Ours}
        \label{fig5:subfig4}
    \end{minipage}
\caption{(a) Visualization of the representations learned by the network $\boldsymbol{F}$ for Cifar10 dataset. (b) Visualization of the $\mathbf{R}$ matrix. (c) and (d) Representations of images generated by DDPM model and our method.}
\label{fig5}
\end{figure}

For the quantitative experiment on the CelebA dataset, we also conduct experiments on the attributes: hair color, hair style, bangs type and young. The results are shown in Fig.\ref{fig_a1}. The DDPM still exhibits noticeable preferences, while our method approaches the distribution of the training dataset more closely. While our method doesn't achieve completely unbiased generation, it demonstrates effectiveness in addressing bias compared to DDPM.

The comparison of FID and sFID scores is shown in Table \ref{table1-1} and Table \ref{table2}. In general, the sFID scores of our method show a significant and consistent advantage over all the compared methods.

We present visualizations of the representations learned by the network $\boldsymbol{F}$ for Cifar10 in Fig.\ref{fig5}(a). We also visualize the $\mathbf{R}$ matrix for 100 randomly selected samples from the Cifar10 dataset in Fig.\ref{fig5}(b). Moreover, we provide visualizations of the generated image representations. The images are generated by the original DDPM model (Fig.\ref{fig5}(c)) and our method (Fig.\ref{fig5}(d)), respectively. These representations are extracted using a classifier trained on the Cifar10 dataset. Notably, the network $\mathbf{F}$ is capable of acquiring refined representations, with the $\mathbf{R}$ matrix closely resembling the ground truth. Furthermore, our method outperforms the original DDPM model, yielding more uniformly generated images.

\begin{small}
\begin{table*}[h]
\centering
\caption{Comparison of FID $\downarrow$ on four diverse datasets (the \textcolor{red}{red} numbers present our improvement)}
\renewcommand\arraystretch{1.5}
\begin{tabular}{cccccccccc}
\hline
Datasets                                       & Scheduler & 20                                      &50                                         & 100                                       & 1000                                       \\ \hline
\multirow{3}{*}{\textbf{Cifar10}}                       & \textbf{DDPM}                       & 133.37\textcolor{red}{-5.32}                                    & 32.72\textcolor{red}{-4.22}                                     & 9.99\textcolor{red}{-3.12}                        & 3.17\textcolor{red}{-0.42}             \\
                                               & \textbf{DDIM}                       & 10.9\textcolor{red}{-2.85}                     &  6.99\textcolor{red}{-2.37}                     &  5.52\textcolor{red}{-2.24}                     &  4.00\textcolor{red}{-1.23}                                           \\
                                               & \textbf{PNDM}                       & 5.00\textcolor{red}{-1.23}                     &  3.95\textcolor{red}{-0.85} & 3.72\textcolor{red}{-0.92} &  3.70\textcolor{red}{-0.78} \\ \hline
\multirow{3}{*}{\textbf{LSUN-church}}                   & \textbf{DDPM }                      & 12.47\textcolor{red}{-3.14}                                        & 10.84\textcolor{red}{-2.05}                                         & 10.58\textcolor{red}{-2.37}                                       & 7.89\textcolor{red}{-1.10}                                  \\
                                               & \textbf{DDIM}                       & 11.7\textcolor{red}{-2.89}                     &  10.0\textcolor{red}{-2.58}                     &  9.84\textcolor{red}{-2.05}                     &  9.88\textcolor{red}{-1.12}                     &                      \\
                                               & \textbf{PNDM}                       & 9.13\textcolor{red}{-1.24}  & 9.89\textcolor{red}{-1.55} & 10.1\textcolor{red}{-1.19} & 10.1\textcolor{red}{-1.22}  \\ \hline
\multirow{3}{*}{\textbf{LSUN-bedroom}}                  & \textbf{DDPM }                      & 22.77\textcolor{red}{-4.13}                                        & 10.81\textcolor{red}{-2.13}                                      & 6.81\textcolor{red}{-1.66}                           & 6.36\textcolor{red}{-1.23}                                         \\
                                               & \textbf{DDIM}                       & 8.47\textcolor{red}{-1.97}                     &  6.05\textcolor{red}{-2.11}                     &  5.97\textcolor{red}{-1.54}                     &  6.32\textcolor{red}{-1.10}                                         \\
                                               & \textbf{PNDM }                     & 5.68\textcolor{red}{-1.24}  & 6.44\textcolor{red}{-1.08} & 6.91\textcolor{red}{-1.12}  & 6.92\textcolor{red}{-0.91}  \\ \hline
\multicolumn{1}{c}{\multirow{2}{*}{\textbf{CelebA-HQ}}} & \textbf{DDIM}                       & 13.4\textcolor{red}{-2.75}  & 8.95\textcolor{red}{-2.20}  & 6.36\textcolor{red}{-1.47}  & 3.41\textcolor{red}{-0.79}  \\
\multicolumn{1}{l}{}                           & \textbf{PNDM }                      & 5.51\textcolor{red}{-1.11}                                         & 3.34\textcolor{red}{-1.03}                                        & 2.81\textcolor{red}{-0.56}                                        & 2.86\textcolor{red}{-0.73}                         \\ \hline
\end{tabular}
\label{table1-1}
\end{table*}
\end{small}

\begin{small}
\begin{table*}[]
\centering
\caption{Comparison of sFID $\downarrow$ on the four diverse Datasets (the \textcolor{red}{red} numbers present our improvement)}
\renewcommand\arraystretch{1.5}
\begin{tabular}{cccccccccc}
\hline
Datasets                                       & Scheduler & 20                                         & 50                                         & 100                                        & 1000                                       \\ \hline
\multirow{3}{*}{\textbf{Cifar10}}                       & \textbf{DDPM}                                    & 80.34\textcolor{red}{-10.79}                     & 40.18\textcolor{red}{-8.44}                                       & 12.23\textcolor{red}{-5.03}                               & 5.74\textcolor{red}{-0.89}                      \\
                                               & \textbf{DDIM}                           & 21.00\textcolor{red}{-5.78}                       & 11.31\textcolor{red}{-5.28}                & 9.88\textcolor{red}{-4.96}                    & 6.18\textcolor{red}{-2.04}                      \\
                                               & \textbf{PNDM}                                         & 8.31\textcolor{red}{-2.45}  & 6.70\textcolor{red}{-1.76}   & 6.53\textcolor{red}{-1.47}  & 6.50\textcolor{red}{-1.23}  \\ \hline
\multirow{3}{*}{\textbf{LSUN-church}}                   & \textbf{DDPM }                     & 24.19\textcolor{red}{-7.26}                             & 23.9\textcolor{red}{-4.23}                                        & 21.52\textcolor{red}{-5.56}                                       & 11.25\textcolor{red}{-2.07}                     \\
                                               & \textbf{DDIM}                                     & 22.69\textcolor{red}{-4.97}              & 22.13\textcolor{red}{-5.39}              & 18.23\textcolor{red}{-3.8}               & 18.35\textcolor{red}{-1.93}                     \\
                                               & \textbf{PNDM}                        & 18.03\textcolor{red}{-2.31} & 18.25\textcolor{red}{-2.73}  & 22.67\textcolor{red}{-1.93}  & 22.66\textcolor{red}{-1.94} \\ \hline
\multirow{3}{*}{\textbf{LSUN-bedroom}}                  & \textbf{DDPM }                             & 35.72\textcolor{red}{-8.29}                 & 23.84\textcolor{red}{-4.34}                          & 11.31\textcolor{red}{-4.28}      & 11.07\textcolor{red}{-2.04}                     \\
                                               & \textbf{DDIM}                                         & 15.86\textcolor{red}{-3.27}              & 10.73\textcolor{red}{-3.04}              & 10.35\textcolor{red}{-2.67}              & 10.97\textcolor{red}{-1.78}                     \\
                                               & \textbf{PNDM }                      & 10.12\textcolor{red}{-2.43} & 11.20\textcolor{red}{-2.29} & 11.79\textcolor{red}{-1.86} & 11.80\textcolor{red}{-1.16} \\ \hline
\multicolumn{1}{c}{\multirow{2}{*}{\textbf{CelebA-HQ}}} & \textbf{DDIM}                       & 26.09\textcolor{red}{-4.69} & 13.22\textcolor{red}{-4.14}  & 11.23\textcolor{red}{-3.23}  & 5.63\textcolor{red}{-1.42} \\
\multicolumn{1}{l}{}                           & \textbf{PNDM }                                 & 9.82\textcolor{red}{-1.78}                                        & 5.53\textcolor{red}{-1.49}                                         & 4.69\textcolor{red}{-1.04}                                     & 4.73\textcolor{red}{-1.55}                      \\ \hline
\end{tabular}
\label{table2}
\end{table*}
\end{small}

\begin{table}
\centering
\caption{Ablation study on the relationship matrices $C$}
\renewcommand\arraystretch{1.2}
\resizebox{\linewidth}{!}{
\begin{tabular}{ccccc}
\hline
\textbf{Method}            & \textbf{Cifar10} & \textbf{Church}      & \textbf{Bedroom}     & \textbf{CelebA} \\ \hline
\textbf{Baseline}          & 6.99             & 10.0                      & 6.05                      & 8.95               \\
\textbf{K-means(10)}       & 6.28             & \multicolumn{1}{c}{8.26} & 4.73                      & 7.26               \\
\textbf{K-means(20)}       & 6.49             & \multicolumn{1}{c}{8.79} & 5.24                      & 7.22             \\ \hline
\makecell{\textbf{learnable} \\ \textbf{(SimCLR)}} & \textbf{4.62}             & \multicolumn{1}{c}{7.42} & 3.94                      & \textbf{6.75}               \\
\makecell{\textbf{learnable} \\ \textbf{(CLIP)}}   & 4.74             & 7.73                      & \multicolumn{1}{c}{4.16} & 6.77               \\
\makecell{\textbf{learnable} \\ \textbf{(ResNet)}} & 4.92             & \textbf{7.33}                     & \textbf{3.82}                      & 6.89               \\ \hline
\end{tabular}}
\label{table3}
\end{table}

\subsubsection{Ablation Study}
In this section, we provide ablation and analysis over different components of our method. 

\textbf{Relationship Matrices $\mathbf{C}$}. We offer several implementation methods for obtaining the relationship matrices $\mathbf{C}$ in $\mathcal{L}_{M}$. The first approach utilizes the contrastive learning model SimCLR \cite{simclr} in combination with K-means clustering. We employ a pre-trained SimCLR model to extract features from the training dataset, and then perform K-means clustering on these features. The resulting clustering centroids serve as prototypes of the submanifolds. In this experiment, we consider using 10 and 20 clustering centroids as options. The second approach introduces learnable matrices, which is proposed in this paper. We choose the pre-trained encoders $\boldsymbol{E}_{pre}$ to be SimCLR, CLIP \cite{clip}, and ResNet50 \cite{he2016deep}  (pre-trained on the ImageNet). We use the DDPM model with DDIM scheduler for experiments and set the sampling time steps to 50. Table \ref{table3} presents the FID scores of the ablation study conducted on the Cifar10, CelebA-HQ, LSUN-church, and LSUN-bedroom datasets. It can be observed that the choice of the pre-trained encoder has minimal impact on the results. For our subsequent experiments, we opt for the learnable approach using the SimCLR model as the pre-trained encoder.

\textbf{Hyperparameters}. We also conduct ablation studies on the number of guidance steps. The experiment results on the LSUN-church, LSUN-bedroom, CelebA-HQ datasets with 50 sampling time steps are shown in Table \ref{table5-1}, Table \ref{table6} and Table \ref{table7}. We apply $MGS$ only on the first few sampling steps. It can be seen that different schedulers have different optimal guidance steps. We choose the best guidance steps for each scheduler.

\begin{table}
\centering
\caption{Ablation study on the LSUN-church dataset}
\renewcommand\arraystretch{1.2}
\begin{tabular}{ccccc}
\hline
\textbf{Guidance steps} & 5             & 10                                 & 20   & 50    \\ \hline
\textbf{DDIM}      & \textbf{7.42} & 7.57                               & 8.46 & 9.24  \\
\textbf{PNDM}      & 8.61          & \multicolumn{1}{c}{\textbf{8.34}} & 9.23 & 9.82  \\
\textbf{DDPM}      & 9.25          & \multicolumn{1}{c}{\textbf{8.79}} & 9.73 & 10.14 \\ \hline
\end{tabular}
\label{table5-1}
\end{table}

\begin{table}
\centering
\caption{Ablation study on the LSUN-bedroom dataset}
\renewcommand\arraystretch{1.2}
\begin{tabular}{ccccc}
\hline
\textbf{Scheduler} & 5             & 10            & 20   & 50   \\ \hline
\textbf{DDIM}      & \textbf{3.94} & 4.13          & 4.87 & 5.18 \\
\textbf{PNDM}      & 5.67          & \textbf{5.36} & 6.22 & 6.36 \\
\textbf{DDPM}      & 8.03          & \textbf{7.68} & 8.45 & 9.79 \\ \hline
\end{tabular}
\label{table6}
\end{table}

\begin{table}
\centering
\caption{Ablation study on the CelebA-HQ dataset}
\renewcommand\arraystretch{1.2}
\begin{tabular}{ccccc}
\hline
\textbf{Scheduler} & 5             & 10            & 20   & 50   \\ \hline
\textbf{DDIM}      & \textbf{6.75} & 6.83          & 7.25 & 7.92 \\
\textbf{PNDM}      & 2.76          & \textbf{2.31} & 3.22 & 3.43 \\ \hline
\end{tabular}
\label{table7}
\end{table}

\vspace{-0.2cm}
\subsubsection{Visual Results}
The visual results of different datasets are shown in Fig.\ref{fig6}, Fig.\ref{cat}, Fig.\ref{church} and Fig.\ref{bedroom}, respectively. Our method outperforms DDPM on image diversity and quality.

\begin{figure*}[h]
\centering
\includegraphics[width=0.8\textwidth]{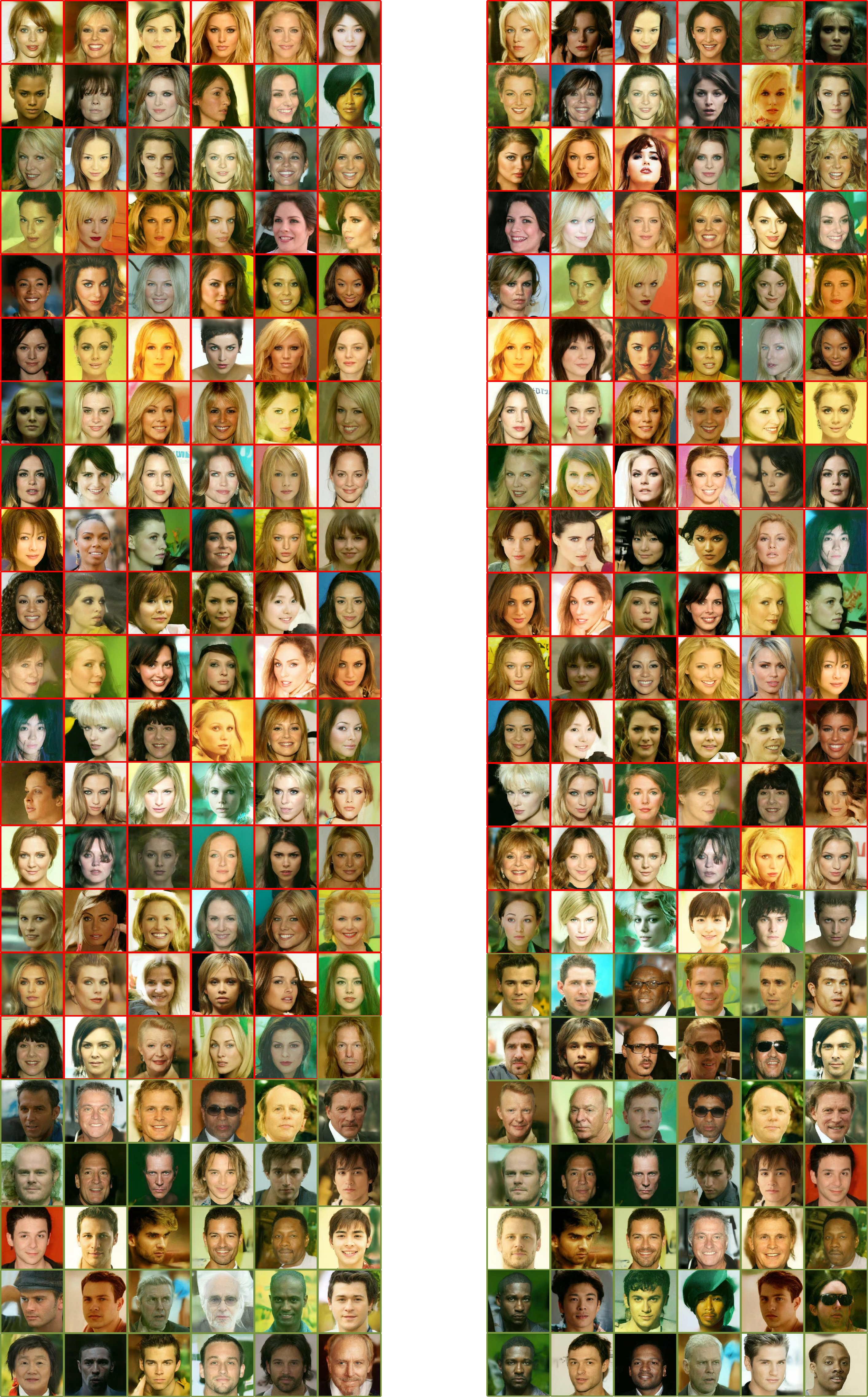}
\caption{Random batch of samples generated from DDPM (\textbf{left}) and $MGS$ (\textbf{right}) on the CelebA-HQ dataset. Samples are sorted and color coded by gender.}
\label{fig6}
\end{figure*}

\begin{figure*}[h]
\centering
\includegraphics[width=1\textwidth]{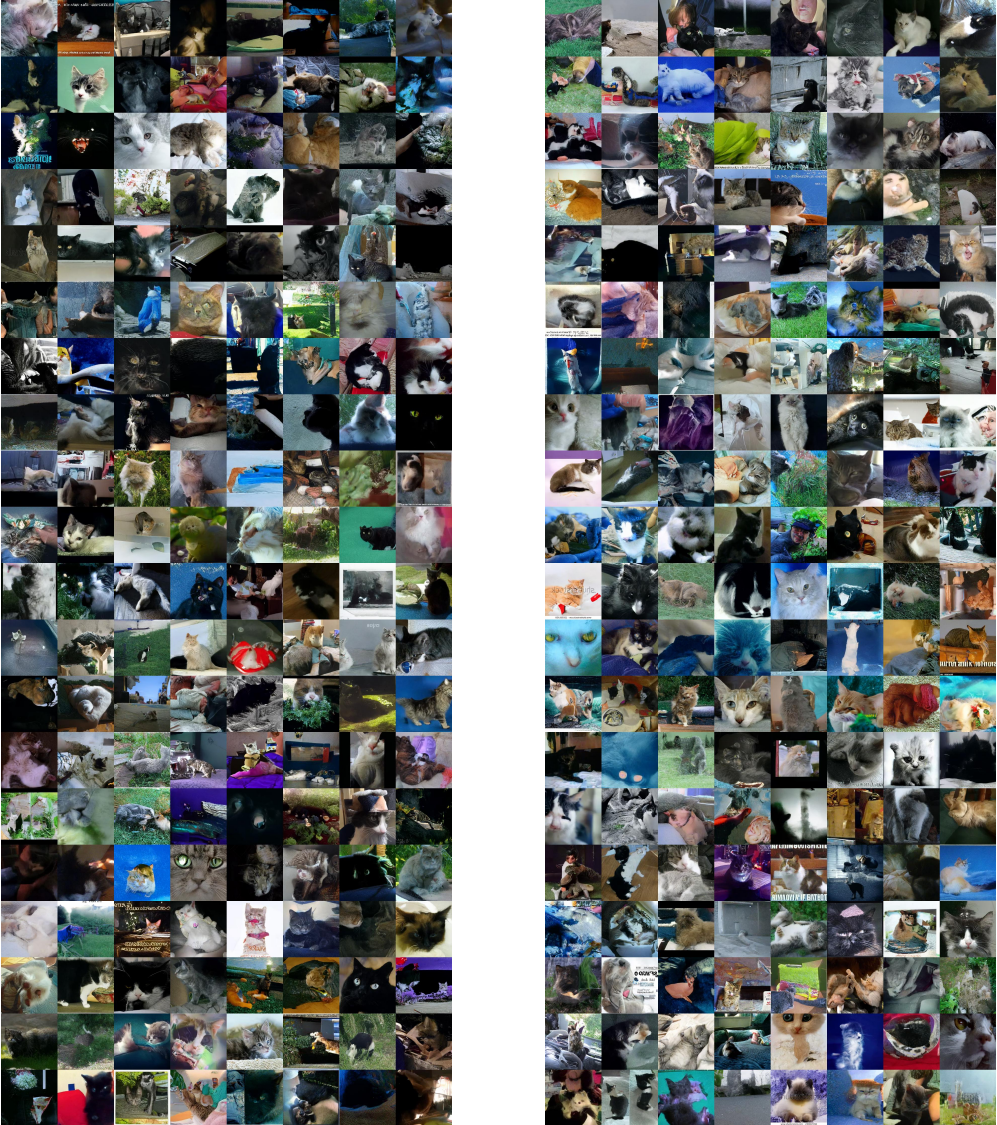}
\caption{Random batch of samples generated from DDPM (\textbf{left}) and $MGS$ (\textbf{right}) on the LSUN-cat dataset.}
\label{cat}
\end{figure*}

\begin{figure*}[h]
\centering
\includegraphics[width=1\textwidth]{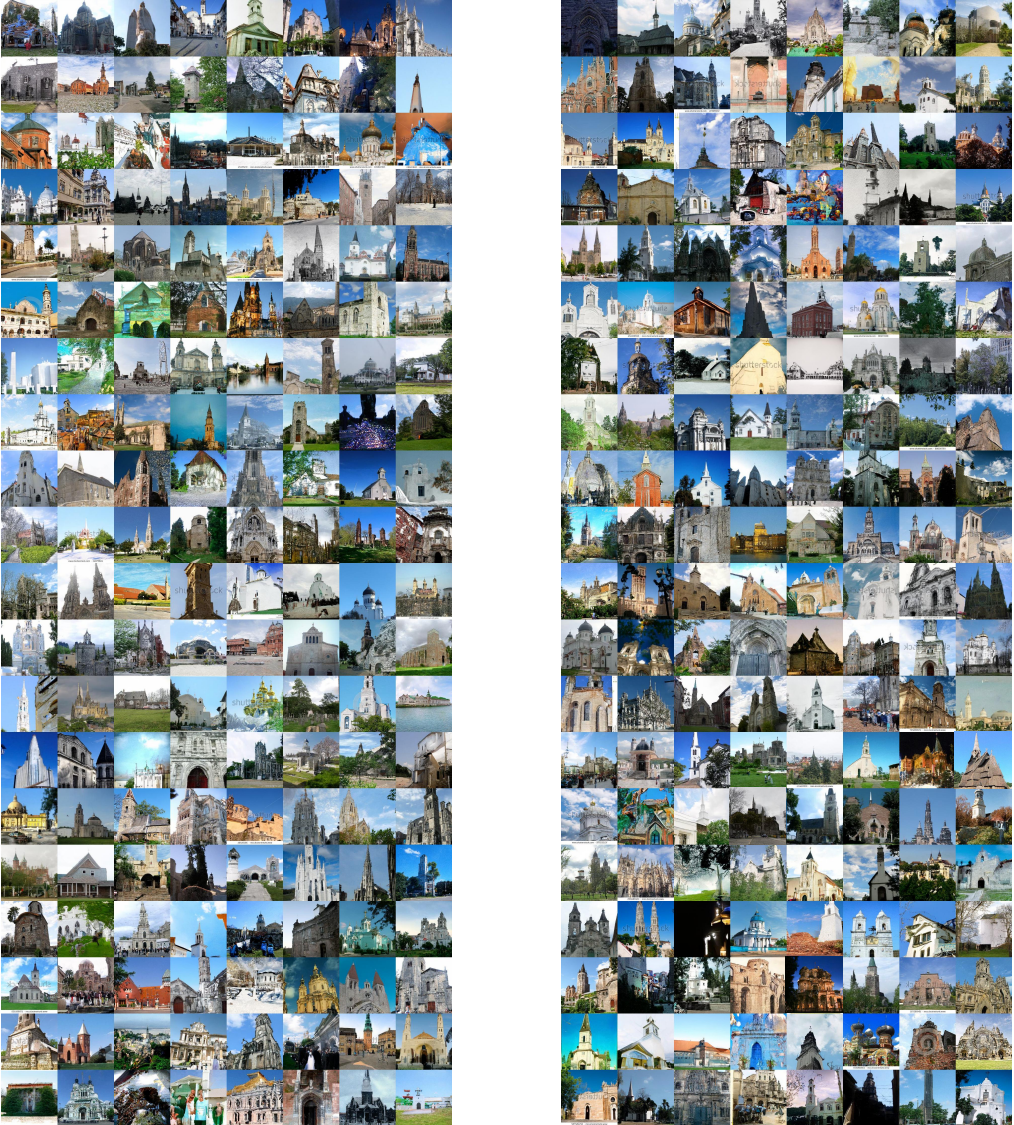}
\caption{Random batch of samples generated from DDPM (\textbf{left}) and $MGS$ (\textbf{right}) on the LSUN-church dataset.}
\label{church}
\end{figure*}

\begin{figure*}[h]
\centering
\includegraphics[width=1\textwidth]{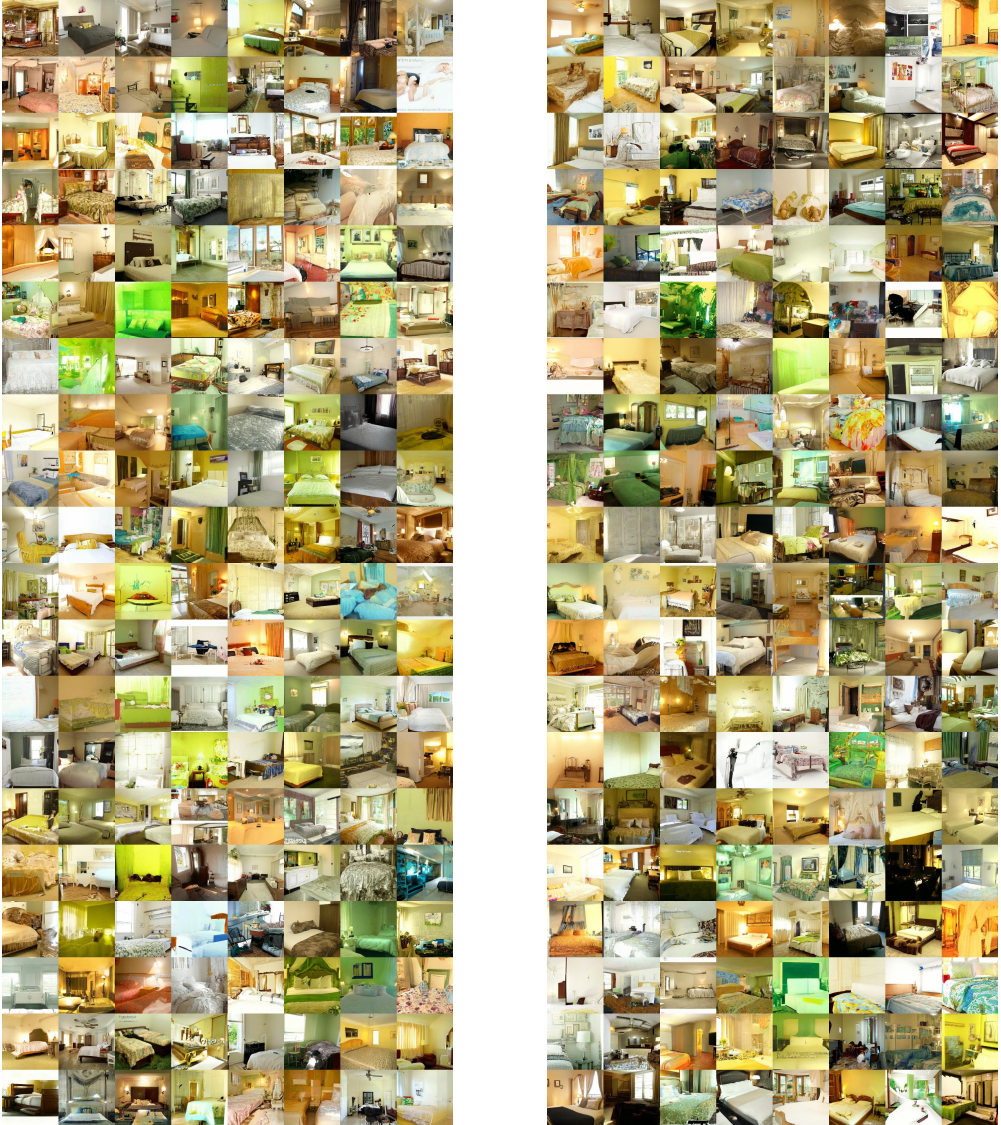}
\caption{Random batch of samples generated from DDPM (\textbf{left}) and $MGS$ (\textbf{right}) on the LSUN-bedroom dataset.}
\label{bedroom}
\end{figure*}

\vspace{-0.2cm}
\subsection{Related Works}
\label{rw}

Diffusion models are a family of probabilistic generative models that can produce high-quality samples from complex data domains. There are three main formulations of diffusion models: denoising diffusion probabilistic models (DDPMs) \cite{ho2020denoising,songdenoising}, score-based generative models (SGMs) \cite{song2019generative,song2020improved}, and stochastic differential equations (Score SDEs) \cite{song2020score,song2021maximum}. A major challenge of diffusion models is their slow and costly sampling process \cite{yang2022diffusion}, which has motivated several research directions to improve their efficiency and accuracy \cite{songdenoising,luhman2021knowledge,liupseudo,rombach2022high,ludpm,lyu2022accelerating,kingma2021variational,nichol2021improved,lu2022maximum}. Another research direction is to adapt diffusion models to data with special structures or properties, such as permutation invariance \cite{xugeodiff,shi2021learning}, manifold structures \cite{vahdat2021score,deriemannian,huang2022riemannian}, and discrete data \cite{austin2021structured,mengconcrete,liu2023learning}.

In this paper, we address a critical yet overlooked issue in current research: the bias issue in generated samples. A recurring phenomenon in the outputs of DDPMs is the propensity for these samples to aggregate around particular modes of the training data, thereby engendering a non-uniform distribution across the data manifold. Furthermore, most diffusion models are trained on web-scraped image datasets, which may contain implicit or explicit biases that reflect the skewed or imbalanced patterns of the data collection process. Such biases can adversely affect the quality and diversity of the generated samples, as well as downstream tasks that rely on them. Previous works \cite{orgad2023editing, friedrich2023fair,seshadri2023bias} also identify a similar issue with text-to-image diffusion models, and propose a method to edit implicit assumptions in the pre-trained models. However, their methods require prior knowledge of attributes the data exhibit bias in. Acquiring such priors can be challenging, whereas our method does not rely on this prior. Moreover, the authors in \cite{chungimproving} propose to add manifold constraint guidance on conditional diffusion models. Their method relies on predefined prompts and focuses more on inverse problems, whereas our method is unsupervised and aims to alleviate data bias problem of image generation. PNDM \cite{liupseudo} proposes a pseudo numerical method to generate sample along a specific manifold in $\mathcal{R}^N$. This method aims to reduce the noise introduced by numerical methods and the manifold is defined by a single sample, whereas our method estimates the manifold based on training data and aims to reduce the data bias in sampling.

\end{document}